\documentclass{article}

 \usepackage[preprint]{neurips_2026}

\usepackage[utf8]{inputenc} % allow utf-8 input
\usepackage[T1]{fontenc}    % use 8-bit T1 fonts
\usepackage{url}            % simple URL typesetting
\usepackage{booktabs}       % professional-quality tables
\usepackage{amsfonts}       % blackboard math symbols
\usepackage{nicefrac}       % compact symbols for 1/2, etc.
\usepackage{microtype}      % microtypography
\usepackage{xcolor}         % colors
\usepackage[pdftex]{graphicx}
\usepackage{epstopdf}
\usepackage{multirow}
\usepackage{colortbl}
\usepackage[pagebackref=true,breaklinks=true,letterpaper=true,colorlinks,bookmarks=false]{hyperref}
\usepackage{subcaption}
\usepackage[namelimits]{amsmath} 
\usepackage{amssymb}            
\usepackage{mathrsfs}            
\usepackage{bm}
\usepackage{wrapfig}
\usepackage{bbm}
\usepackage{cleveref}
\usepackage{algorithm}
\usepackage{algorithmic}
\usepackage{marvosym}
\usepackage{inconsolata}

\newcommand{\eg}{\textit{e.g.}}

\makeatletter
\newcommand*\bigcdot{\mathpalette\bigcdot@{.5}}
\newcommand*\bigcdot@[2]{\mathbin{\vcenter{\hbox{\scalebox{#2}{$\m@th#1\bullet$}}}}}
\makeatother

\definecolor{c2}{HTML}{FBD9BD}
\definecolor{c3}{HTML}{fe793d}
\definecolor{c4}{HTML}{eedeb0}
\definecolor{pp}{HTML}{BC7FCD}
\definecolor{bb}{HTML}{CDE8E5}
\definecolor{rouse}{rgb}{0.981,0.961,0.941}

\crefname{section}{Sec.}{Secs.}
\Crefname{section}{Section}{Sections}
\Crefname{table}{Tab.}{Tabs.}
\crefname{table}{Table}{Tables}
\crefname{figure}{Fig.}{Figs.}
\crefname{equation}{Eq.}{Eqs.}

\usepackage{makecell}
\usepackage{tikz}
\usepackage{pgfplots}
\pgfplotsset{compat=1.17}
\usetikzlibrary{arrows.meta,positioning,calc,fit,backgrounds,shapes.geometric,decorations.pathreplacing}
\usepackage{colortbl}
\usepackage{enumitem}
\usepackage{wrapfig}
\usepackage{pifont}
\usepackage{mathtools}

\usepackage{amsthm}
\newtheorem{theorem}{Theorem}
\newtheorem{definition}{Definition}
\newtheorem{assumption}{Assumption}

\newcommand{\cmark}{\ding{51}}

\newcommand{\methodname}{RIDE}

\newcommand{\vs}{\textit{vs.}}
\newcommand{\Lretinex}{\mathcal{L}_{\text{ret}}}
\newcommand{\Lseg}{\mathcal{L}_{\text{seg}}}
\newcommand{\Lbound}{\mathcal{L}_{\text{bnd}}}
\newcommand{\Lcontrast}{\mathcal{L}_{\text{con}}}
\newcommand{\R}{\mathbb{R}}

\definecolor{bestcolor}{RGB}{255,100,100}
\definecolor{secondcolor}{RGB}{100,100,255}

\title{RIDE: Retinex-Informed Decoupling for \\ Exposing Concealed Objects}

\author{
 Chunming He$^{1}$\,,
 Rihan Zhang$^{1}$\,,
 Dingming Zhang$^{1}$\,,
 Chengyu Fang$^{2}$\,, \\
 \textbf{Longxiang Tang}$^{3}$\,,
 \textbf{Jingjia Feng}$^1$\,,
 \textbf{Fengyang Xiao}$^{1,*}$\,,
	\textbf{Sina Farsiu$^{1,*}$}\\
	$^1$Duke University, ~$^2$Tsinghua University, ~ $^3$Harvard University
\\
$*$ Corresponding Author,
Contact: chunming.he@duke.edu
 }

\begin{document}

\maketitle

\begin{abstract} \label{abstract}
Concealed Object Segmentation (COS) encompasses a family of dense-prediction tasks, including camouflaged object detection, polyp segmentation, transparent object detection, and industrial defect inspection, where targets are visually entangled with their surroundings through different physical mechanisms. Existing methods either operate directly on RGB images or employ \emph{heterogeneous} decompositions (\eg, Fourier, wavelet) that redistribute spatial evidence across scale/frequency coefficients, making pixel-aligned cues less direct. We introduce a fundamentally different perspective: \textbf{homogeneous image decomposition} via Retinex theory, which factorizes an image into illumination and reflectance components within the \emph{same} spatial domain. Our key insight is that visual entanglement enforces appearance matching in the composite space, but this does \emph{not} necessitate simultaneous matching in both component spaces, a phenomenon we formalize as the \textbf{Discriminability Gap Theorem}. Crucially, we show that across diverse COS sub-tasks, the underlying physical processes systematically anti-correlate illumination and reflectance differences, yielding theoretical guarantees that Retinex decomposition preserves or strictly improves total foreground--background discriminability across the full physical regime, with anti-correlation maximizing the gain. Building on this, we propose \textbf{RIDE} comprising: (i) a Task-Driven Retinex Decomposition module that learns segmentation-optimal factorizations end-to-end; (ii) a Discriminability Gap Attention mechanism that adaptively exploits where decomposition helps; and (iii) a Camouflage-Breaking Contrastive loss operating in reflectance feature space. Extensive experiments across four COS tasks and six broader segmentation settings demonstrate consistent improvements over established baselines. Comprehensive analyses validate our theoretical predictions and reveal interpretable patterns. Code will be released.
\end{abstract}

%% narrow the gap between equations and sentences
\setlength{\abovedisplayskip}{2pt}
\setlength{\belowdisplayskip}{2pt}
\section{Introduction} \label{introduction} %\vspace{-2mm}

Concealed Object Segmentation (COS) aims to identify objects that are visually blended into their surroundings through natural or artificial camouflage~\cite{fan2020camouflaged,le2019anabranch}. This makes COS fundamentally more challenging and has attracted growing attention due to its broad applications~\cite{le2019anabranch,fan2020pranet,he2025scaler,he2026refining}.

Unlike standard segmentation where targets exhibit clear visual contrast, camouflaged objects exploit texture matching, color assimilation, and structural mimicry, tissue homogeneity, transparency, or subtle material defects to achieve near-perfect visual homogeneity with their environments. Existing methods mainly focus on architectural engineering, such as multi-scale feature aggregation~\cite{fan2021concealed}, attention mechanisms~\cite{zhai2021mutual}, or edge-guided refinement~\cite{zhu2022can}, yet overlook a fundamental bottleneck: these approaches fundamentally search for discriminative signals within the same composite image space where camouflage is \textit{designed} to eliminate such signals.

A natural strategy to handle such visual entanglement is image decomposition, and frequency-domain methods (Fourier~\cite{zhong2022detecting}, wavelet~\cite{He2023Camouflaged}, DCT~\cite{cong2023frequency}) represent this direction. However, we argue that such \textit{heterogeneous} decompositions are suboptimal for COS. 
By mapping images from the spatial domain to scale/frequency coefficients, they redistribute spatial evidence in ways that make pixel-aligned cues less direct, weakening locality crucial for spatially-defined targets.
We instead advocate for \textit{homogeneous} decomposition that factorizes images within the same spatial domain, preserving spatial correspondence while exposing hidden discriminative signals (Fig.~\ref{fig:teaser}).

\begin{figure}[t]
\setlength{\abovecaptionskip}{0cm}
\centering
\includegraphics[width=\linewidth]{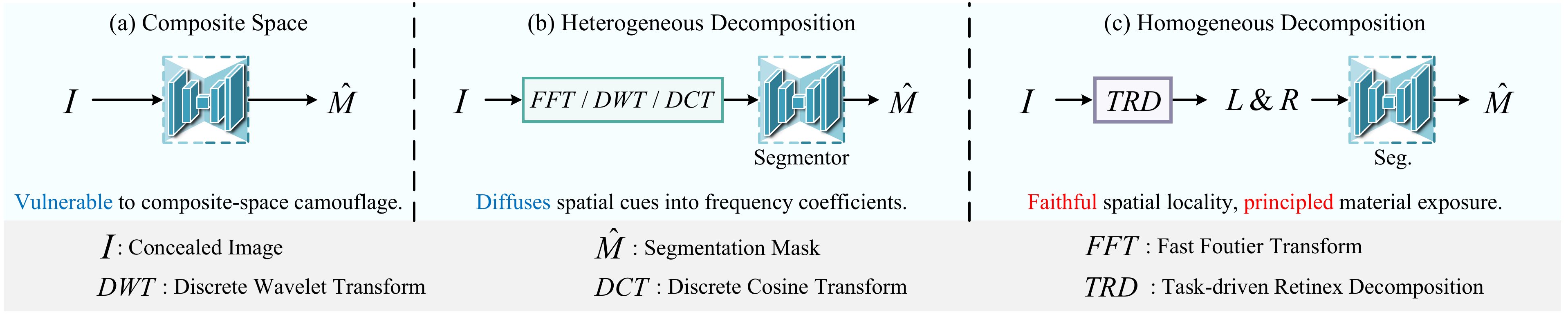}
\caption{{Heterogeneous \vs{} homogeneous decomposition for COS.} Heterogeneous methods (left) transform images to different representation spaces (frequency domain), scattering spatial information. Our homogeneous approach (right) decomposes within the same spatial domain via Retinex theory, preserving locality while exposing concealed material differences.}
\label{fig:teaser}
\vspace{-5mm}
\end{figure}

To this end, we propose \textbf{\methodname{}} (\textbf{R}etinex-\textbf{I}nformed \textbf{D}ecomposition for \textbf{E}xposing concealed objects), grounded in Retinex theory~\cite{land1971lightness}, which models image formation as $I = L \odot R$ with illumination $L$ and reflectance $R$. Our key insight is that across diverse COS sub-tasks, visual entanglement achieves composite appearance matching ($I_\text{fg} \approx I_\text{bg}$) without requiring simultaneous matching of both components. The physical processes governing each sub-task often produce \emph{anti-correlated} illumination--reflectance differences: (i) in COD, organisms select habitats with favorable lighting that compensates their reflectance signatures~\cite{stevens2009animal}; (ii) in polyp endoscopy, point-source illumination on curved mucosal surfaces creates strong illumination gradients that mask reflectance contrast between polyps and tissue; (iii) in transparent objects, specular reflections at glass boundaries produce illumination spikes where reflectance is neutral; (iv) in industrial defect inspection, directional lighting on textured surfaces creates illumination patterns that homogenize defects with the substrate. We formalize this as the \textbf{Discriminability Gap Theorem} (Theorem~\ref{thm:main}), proving that Retinex decomposition preserves or improves total foreground--background discriminability for any non-degenerate component configuration, with anti-correlation between $\delta_L$ and $\delta_R$ maximally amplifying the gain.

In the \textbf{Task-driven Retinex Decomposition (TRD)} module, \methodname{} learns segmentation-optimal factorizations end-to-end. A Mutual Exclusivity loss assigns edges to either illumination or reflectance, yielding clean material boundaries, while a Reflectance-Boundary head directly aligns reflectance gradients with ground-truth object boundaries. The \textbf{Discriminability Gap Attention (DGA)} module exploits the non-uniform gains from decomposition by estimating local discriminability gains and adaptively suppressing reliance on component features when no positive gap is detected. We further propose \textbf{Camouflage-Breaking Contrastive Learning (CBCL)} in reflectance space, where genuine material differences provide a physical foundation for foreground-background separation.

Our contributions are summarized as follows:

\begin{enumerate}[leftmargin=*,nosep]
    \item We unify a family of COS tasks (COD, polyp, glass, defect) under a single \textbf{homogeneous \vs{} heterogeneous decomposition} formulation, identifying their shared physical mechanism: anti-correlated illumination-reflectance differences.
    \item Grounded in Retinex theory, we formalize this mechanism via the \textbf{Discriminability Gap Theorem}, with assumptions explicitly stated and a complete proof in the appendix.
    \item We propose \textbf{\methodname{}}, integrating Task-Driven Retinex Decomposition, Discriminability Gap Attention, and reflectance-space contrastive learning into a unified, plug-in framework.
    \item Extensive experiments across four COS tasks, plus six broader segmentation tasks for generalization, validate our framework. In-depth analyses validate the theorem's predictions and reveal interpretable patterns of how decomposition exposes different forms of concealment.
\end{enumerate}

\section{Related Work}
\label{sec:related}
\noindent\textbf{Concealed Object Segmentation.} COS broadly covers tasks where targets share visual attributes with their surroundings, including camouflaged objects~\cite{fan2020camouflaged,le2019anabranch,xiao2024survey}, polyps~\cite{fan2020pranet}, transparent surfaces~\cite{mei2020don}, and industrial defects~\cite{he2025segment}. While each sub-task has historically been studied in isolation, they share the common challenge of visual entanglement. Recent COD advances focus on: (i) \textit{multi-scale reasoning}~\cite{pang2022zoom,jia2022segment}; (ii) \textit{foundation model adaptation}~\cite{chen2024sam,he2025segment}; (iii) \textit{frequency-domain analysis}~\cite{zhong2022detecting,He2023Camouflaged}; (iv) \textit{diffusion-based refinement}~\cite{he2025reversible,shen2025uncertainty}. Polyp segmentation is dominated by encoder-decoder designs~\cite{fan2020pranet} and transformer-based variants. Glass detection emphasizes boundary cues~\cite{mei2020don,he2021enhanced}. Industrial defect inspection often follows reconstruction-based or memory-bank paradigms. Our work is the first to unify these tasks under a homogeneous Retinex decomposition framework.

\noindent\textbf{Retinex Theory and Applications.}
Land and McCann~\cite{land1971lightness} proposed Retinex theory to model image formation as the product of illumination and reflectance. It has been widely applied to image enhancement and restoration~\cite{wei2018deep,wu2022uretinex,he2023reti,he2025unfoldir,he2025unfoldldm}, including learned~\cite{wei2018deep}, unfolding-based~\cite{wu2022uretinex,he2025unfoldir}, and diffusion-based~\cite{he2023reti} variants. Despite this success, Retinex has been largely unexplored for image segmentation. We are the first to systematically study Retinex decomposition for the broad family of COS tasks, providing both theoretical justification and a unified practical framework.

\noindent\textbf{Image Decomposition for Dense Prediction.}
Image decomposition has been explored for dense prediction. Intrinsic decomposition separates shading and albedo~\cite{barrow1978recovering,Xiao2026Beyond}; segmentation-oriented methods use texture--structure separation~\cite{xu2012structure,Xiao2026Quali}, edge-aware filtering~\cite{he2012guided,he2023HQG}, or learned disentanglement~\cite{dai2021disentangling}. Frequency-domain approaches further decompose features via Fourier~\cite{xu2020learning,rao2021global}, wavelet~\cite{He2023Camouflaged}, or learnable filters~\cite{chi2020fast}. However, these methods redistribute spatial evidence across frequency coefficients. Whereas COS targets are spatially defined, making homogeneous decomposition more suitable. The general principle that transforming observation spaces can increase foreground-background separability has long-standing roots in source separation (ICA~\cite{hyvarinen2000ica}) and discriminant analysis (trace-ratio LDA~\cite{wang2007trace,li2023sparse}). Our contribution is not this principle itself but its concrete instantiation for Retinex-based COS: a Retinex-specific condition (anti-correlated $\delta_L, \delta_R$) under which decomposition is provably beneficial, together with a task-driven architecture that learns such factorizations end-to-end.

\section{Theoretical Foundation: Discriminability Gap Analysis}
\label{sec:theory}

\subsection{Preliminaries and Notation}

Consider an image $I \in \R^{H \times W \times 3}$ containing a foreground object occupying region $\Omega_f$ and background region $\Omega_b$. Under the Retinex model, $I(p) = L(p) \odot R(p)$ for each pixel $p$, where $L(p) \in \R_{+}$ is a single-channel illumination and $R(p) \in [0,1]^3$ is RGB reflectance. We work in log-space: $\tilde{I}(p) = \tilde{L}(p) + \tilde{R}(p)$.

\begin{definition}[Regional Discriminability]
\label{def:disc}
For a (possibly vector-valued) component $X \in \{\tilde{I}, \tilde{L}, \tilde{R}\}$, the regional discriminability between foreground and background is
\begin{equation}
    D(X) = \|\mu_X^{f} - \mu_X^{b}\|_2^2 \,\Big/\, \big(\operatorname{tr}(\Sigma_X^{f}) + \operatorname{tr}(\Sigma_X^{b}) + \epsilon_R\big),
    \label{eq:discriminability}
\end{equation}
where $\mu_X^{r} = \mathbb{E}_{p \in \Omega_r}[X(p)]$ and $\Sigma_X^{r} = \operatorname{Cov}_{p \in \Omega_r}(X(p))$ for $r \in \{f, b\}$, and $\epsilon_R > 0$. This is the \emph{trace-ratio surrogate} of multivariate Fisher's discriminant criterion~\cite{wang2007trace,li2023sparse}: the numerator measures the squared foreground-background mean separation, while the denominator uses the total within-region scatter (trace of covariance) rather than the inverse covariance matrix in the standard Fisher form. This isotropic simplification avoids matrix inversion, remains numerically stable on dense feature maps, but does not model anisotropic covariance structure.
\end{definition}

\begin{definition}[Discriminability Gap]
The discriminability gap of component $X$ is $\Delta D(X) = D(X) - D(\tilde{I})$. A positive gap indicates that $X$ is more discriminative than the composite image.
\end{definition}

\subsection{The Discriminability Gap Theorem}
\begin{assumption}[Working assumptions]\label{ass:main}
(A1) The image follows the log-domain Retinex model $\tilde{I} = \tilde{L} + \tilde{R}$.
(A2) Within each region $\Omega_r$ ($r\in\{f,b\}$), $\tilde{L}$ and $\tilde{R}$ are conditionally independent, so $\operatorname{Cov}(\tilde{L},\tilde{R})_r \approx 0$.
(A3) The visual entanglement condition $D(\tilde{I}) \leq \epsilon$ holds for some small $\epsilon > 0$.
\end{assumption}

\begin{theorem}[Discriminability Gap Existence]\label{thm:main}
Under Assumption~\ref{ass:main}, let $\delta_L = \mu_{\tilde{L}}^f - \mu_{\tilde{L}}^b$ and $\delta_R = \mu_{\tilde{R}}^f - \mu_{\tilde{R}}^b$. Define the cosine angle between the two mean-difference vectors as
\[
    \rho \;=\; \cos\theta \;=\; \frac{\delta_L^\top \delta_R}{\|\delta_L\|\,\|\delta_R\|}\;\in\;[-1,1],
\]
and
\[
    \xi \;=\; \frac{\|\delta_L\|\,\|\delta_R\|}{\|\delta_L\|^2 + \|\delta_R\|^2}\;\in\;(0, \tfrac{1}{2}],
\]
where the upper bound on $\xi$ follows from AM--GM. Then, provided $\delta_L$ and $\delta_R$ are nonzero,
\begin{equation}
    D(\tilde{R}) + D(\tilde{L}) \;\geq\; D(\tilde{I}) \cdot \frac{1 + 2\xi}{1 + 2\rho\,\xi}.
    \label{eq:theorem}
\end{equation}
The factor $(1 + 2\xi)/(1 + 2\rho\xi)$ is no smaller than one for any $\rho \leq 1$, with equality only when $\rho = 1$ or one component has vanishing mean difference. Anti-correlation ($\rho < 0$) further amplifies the gain, explaining why COS sub-tasks, whose physical mechanisms systematically produce $\rho < 0$, benefit strongly from Retinex decomposition.
\end{theorem}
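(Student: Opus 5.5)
The plan is a direct computation: write both sides of \eqref{eq:theorem} in terms of the component mean differences and within-region scatters, then apply a Cauchy--Schwarz (Titu/Engel) inequality. Set $a=\|\delta_L\|$ and $b=\|\delta_R\|$. Treat the single-channel $\tilde{L}$ as broadcast to the three channels, so that $\delta_L$ and $\delta_R$ live in the same space and $\delta_I=\delta_L+\delta_R$ by (A1). Set $S_L=\operatorname{tr}(\Sigma^f_{\tilde L})+\operatorname{tr}(\Sigma^b_{\tilde L})$ and define $S_R$ analogously.

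\emph{Step 1 (the composite).} By (A1), $\Sigma^r_{\tilde I}=\Sigma^r_{\tilde L}+\Sigma^r_{\tilde R}+2\operatorname{Cov}(\tilde L,\tilde R)_r$. By (A2) the cross term vanishes, so the denominator of $D(\tilde I)$ is $S_L+S_R+\epsilon_R$. For the numerator, expand
\[
\|\delta_L+\delta_R\|^2=a^2+b^2+2\rho ab=(a^2+b^2)(1+2\rho\xi),
\]
which uses exactly the definitions of $\rho$ and $\xi$. Hence
\[
D(\tilde I)=\frac{(a^2+b^2)(1+2\rho\xi)}{S_L+S_R+\epsilon_R}.
\]
If $1+2\rho\xi=0$ (which forces $\rho=-1$ and $a=b$), then $D(\tilde I)=0$, and \eqref{eq:theorem} should be read as the trivial bound $D(\tilde R)+D(\tilde L)\ge 0$. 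Otherwise $1+2\rho\xi>0$, and we may divide by it.

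\emph{Step 2 (the components).} Apply Titu's lemma to get
\[
D(\tilde L)+D(\tilde R)=\frac{a^2}{S_L+\epsilon_R}+\frac{b^2}{S_R+\epsilon_R}\ \ge\ \frac{(a+b)^2}{S_L+S_R+2\epsilon_R}=\frac{(a^2+b^2)(1+2\xi)}{S_L+S_R+2\epsilon_R}.
\]
Dividing by the expression from Step 1 gives the ratio $(1+2\xi)/(1+2\rho\xi)$. The only discrepancy is a residual factor $(S_L+S_R+\epsilon_R)/(S_L+S_R+2\epsilon_R)$.

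\emph{Main obstacle: the regularizer $\epsilon_R$.} The one real obstacle is this leftover factor, which arises because $\epsilon_R$ is counted twice on the component side. I would handle it in one of two ways.
\begin{itemize}
\item[(i)] Take the per-component regularizer to be $\epsilon_R/2$, i.e.\ split the stabilizer across the two components. This is natural, since the composite scatter is the sum of the two component scatters. With this choice the bound is exact.
\item[(ii)] Keep $\epsilon_R$ as defined and state \eqref{eq:theorem} up to the multiplicative factor $(S+\epsilon_R)/(S+2\epsilon_R)$, where $S=S_L+S_R$. This factor lies in $[1/2,1]$ and tends to $1$ when $\epsilon_R\ll S$, which is the regime where $\epsilon_R$ acts only as a numerical stabilizer.
\end{itemize}
I would try (i) first and remark on (ii). I would also note that (A2) is only approximate. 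A nonzero cross-covariance $C_r$ perturbs the composite denominator by $2\operatorname{tr}(C_f+C_b)$, which again enters only as a multiplicative correction.

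\emph{Step 3 (the claims about the factor).} These follow from elementary algebra. Since $\xi>0$,
\[
\frac{1+2\xi}{1+2\rho\xi}\ge 1 \iff \rho\le 1,
\]
with equality iff $\rho=1$. Moreover, $\xi\to0$ when one of $a,b$ vanishes, and in that limit the factor tends to $1$. Finally, the factor is strictly decreasing in $\rho$, so values $\rho<0$ make it exceed $1+2\xi$. The maximal gain occurs as $\rho\to-1$ with $\xi\to 1/2$, where the factor blows up; this matches the degenerate case isolated in Step 1.
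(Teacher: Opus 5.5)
Your proposal is essentially the paper's own argument: you expand $\delta_I=\delta_L+\delta_R$ through $\rho$ and $\xi$, use additivity of the within-region scatter under (A2), and apply Titu's inequality to $\|\delta_L\|^2/W_L+\|\delta_R\|^2/W_R$. The $\epsilon_R$ residual you isolate is a genuine defect of the statement, not of your method: with $a=b$ and $S_L=S_R=S$ (compatible with (A1)--(A3)), the left side of the claimed inequality is $2a^2/(S+\epsilon_R)$ while the right side is $2a^2/(S+\epsilon_R/2)$ for every $\rho>-1$. So the paper's sketch, which reads the stated bound directly off Titu, implicitly needs $W_L+W_R\le W_I$ (with $W_X$ the full denominator of $D(X)$), i.e.\ $\epsilon_R$ counted once as in your option (i) or $\epsilon_R\to 0$, and you should present the theorem in that corrected form rather than as one of two options to try.
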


\begin{proof}[Proof Sketch]
From $\tilde{I} = \tilde{L} + \tilde{R}$, the mean differences satisfy $\delta_I = \delta_L + \delta_R$, so $\|\delta_I\|^2 = \|\delta_L\|^2 + \|\delta_R\|^2 + 2\|\delta_L\|\,\|\delta_R\|\,\rho$. Combined with the variance decomposition under (A2) and Titu's inequality (Cauchy--Schwarz in Engel form) applied to $\|\delta_L\|^2/W_L + \|\delta_R\|^2/W_R$, this yields the stated bound. The complete proof is in Appendix~\ref{app:proof}.
\end{proof}
 
\textbf{Physical interpretation across COS sub-tasks.} While Theorem~\ref{thm:main} guarantees decomposition gain for any $\rho \leq 1$ (with equality only in the trivial $\rho = 1$ case), the magnitude of the gain depends critically on $\rho$: anti-correlation maximizes it, while alignment trivializes it. COS sub-tasks systematically produce $\rho < 0$ through diverse physical mechanisms, which explains why decomposition is \emph{particularly} effective in this domain. We take COD and PIS tasks (two classic COS problems) as examples:
 
\begin{itemize}[leftmargin=*, nosep]
    \item \emph{Camouflaged Animals (COD):} Camouflage enforces $L_\text{fg} \odot R_\text{fg} \approx L_\text{bg} \odot R_\text{bg}$. Organisms select habitats with favorable lighting (\eg, dappled leaf litter), which produces anti-correlated $\delta_L,\delta_R$~\cite{stevens2009animal}.
    \item \emph{Polyp Endoscopy:} A point light source attached to the endoscope tip creates strong distance-dependent illumination on curved mucosal surfaces. Polyps and surrounding tissue have intrinsically distinct reflectance (vascularization, surface smoothness) that is masked by these illumination gradients, providing a textbook setting for $\rho < 0$.
\end{itemize}
 
This unifies COS tasks under one principle: visual entanglement defines each task and induces the anti-correlation exploited by Retinex decomposition. Building on this theoretical foundation, the next section instantiates the principle as a concrete network architecture and learning objective.

\section{Method: \methodname{} Framework}
\label{sec:method}

Based on the theoretical analysis in \S\ref{sec:theory}, we present the \methodname{} framework (Fig.~\ref{fig:framework}). \methodname{} comprises four key components: (1) Task-Driven Retinex Decomposition (\S\ref{sec:lrd}), (2) Triple-View Feature Encoder (\S\ref{sec:encoder}), (3) Discriminability Gap Attention (\S\ref{sec:dga}), and (4) Progressive Decoupled Decoder (\S\ref{sec:decoder}).

\begin{figure}[t]
\setlength{\abovecaptionskip}{0cm}
\centering
\includegraphics[width=\linewidth]{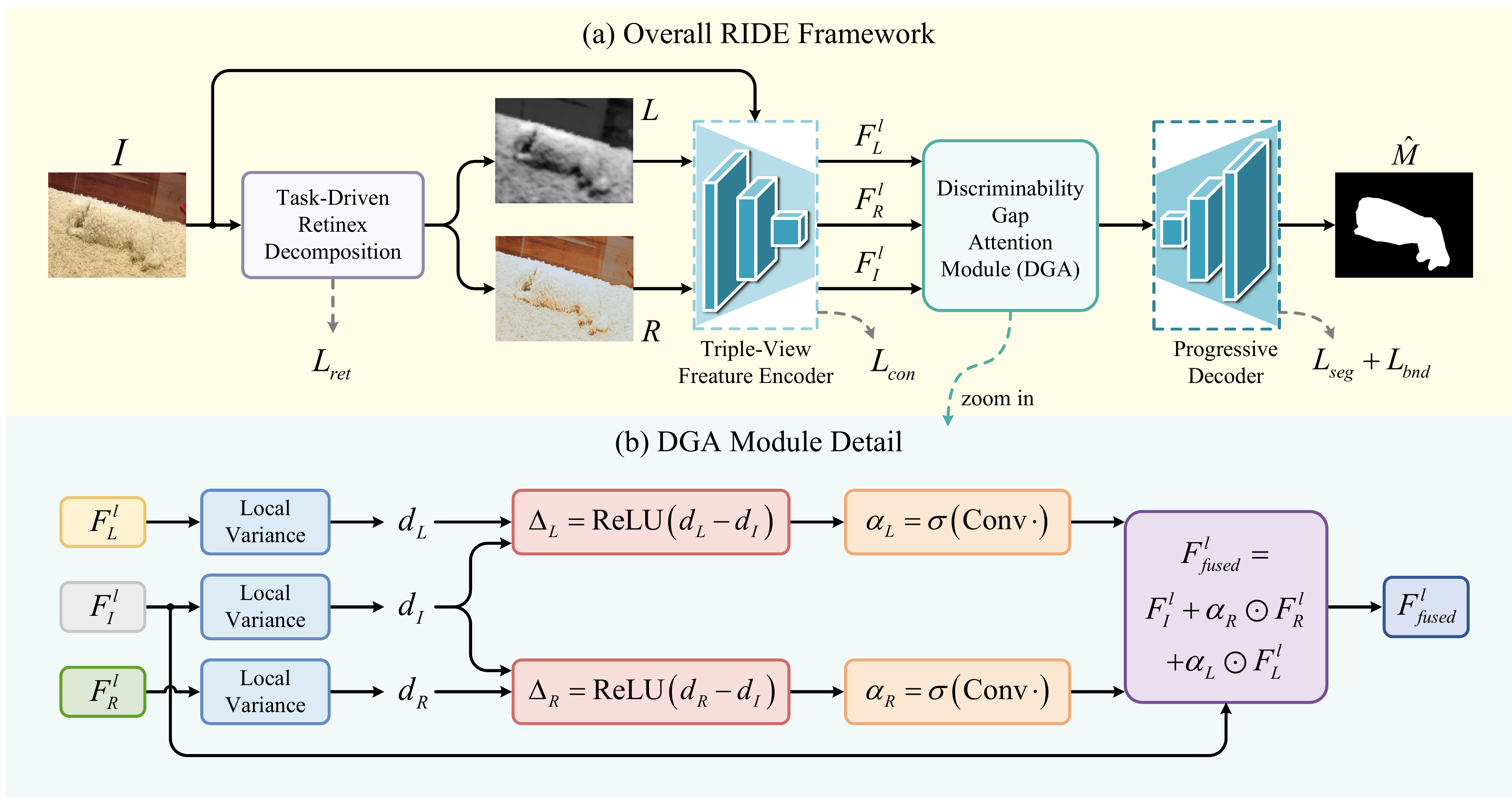}
\caption{{Overall architecture of \methodname{}.} 
}
\label{fig:framework}
\vspace{-1em}
\end{figure}

\subsection{Task-driven Retinex Decomposition (TRD)}
\label{sec:lrd}

Unlike conventional Retinex methods for image enhancement, our decomposition is jointly optimized for segmentation, learning to produce decompositions that maximally expose camouflaged objects.

\textbf{Architecture.} We employ a lightweight U-Net~\cite{ronneberger2015u} $\mathcal{G}_\theta$ with $\sim$1.2M parameters that takes image $I$ as input and produces illumination $L$ and reflectance $R$, formulated as:
\begin{equation}
    L, R = \mathcal{G}_\theta(I), \quad \text{s.t.} \quad L \in \R^{H\times W}_{+}, \;\; R \in [0,1]^{H\times W\times 3}
    \label{eq:decompose}
\end{equation}
We use \texttt{Sigmoid} for reflectance and \texttt{Softplus} for illumination to enforce valid ranges.

\textbf{Retinex-aware loss.} The decomposition is guided by physical priors through:
\begin{equation}
    \Lretinex = {\|I - L \odot R\|_1} + { \|\nabla L\|_2^2} + {\sum_c \|\nabla R_c\|_1} + {\mathcal{L}_\text{ME}},
    \label{eq:retinex_loss}
\end{equation}
where $c$ indexes color channels. $\nabla = (\nabla_h, \nabla_v)$ denotes spatial gradients along horizontal and vertical directions. $\mathcal{L}_\text{ME}$ represents Mutual Exclusivity (ME) loss, which encourages edges to be attributed to either illumination \textit{or} reflectance, but not both:
\begin{equation}
    \mathcal{L}_\text{ME} = \frac{1}{|\Omega|}\sum_{p \in \Omega} \sum_{d \in \{h,v\}} |\nabla_d L(p)| \cdot |\nabla_d R(p)|.
    \label{eq:me_loss}
\end{equation}
This is crucial for COS: clean reflectance edges indicate material (and potential object) boundaries, while shared edges cause ambiguity. The ME loss acts as a ``winner-take-all'' textural attribution.

\textbf{Task-driven training.} $\Lretinex$ is optimized jointly with the segmentation objective, allowing gradients from $\Lseg$ to propagate back to $\mathcal{G}_\theta$ and encourage \textit{segmentation-optimal} decompositions. Rather than seeking a perfect Retinex factorization, the goal is to maximize the discriminability gap.
\vspace{-1mm}
\subsection{Triple-View Feature Encoder}\vspace{-1mm}
\label{sec:encoder}
We extract multi-scale features from all three views ($I, L, R$) using a shared encoder $\Phi$ with view-specific lightweight adapters. For each view $V \in \{I, L, R\}$ and level $l \in \{1,2,3,4\}$, we get: 
\begin{equation}
    F_V^l = \Phi^l(V) + \mathcal{A}_V^l(\Phi^l(V)),
    \label{eq:adapter}
\end{equation}
where $\mathcal{A}_V^l$ is a common bottleneck adapter ($\text{Conv}_{1\times1}(C_l, C_l/r) \to \text{GELU} \to \text{Conv}_{1\times1}(C_l/r, C_l)$, $r=4$). The shared backbone ensures efficiency, while adapters capture view-specific characteristics: $\mathcal{A}_I$ for composite appearance, $\mathcal{A}_L$ for illumination patterns, and $\mathcal{A}_R$ for material-related features. The backbone parameters $\Phi$ are tied across views, and implementation-level optimizations (shared activations across views where applicable, fused adapter convolutions) keep the per-view marginal cost small relative to single-view baselines, as reflected in our wall-clock measurements (Table~\ref{table:efficiency}).
\vspace{-1mm}
\subsection{Discriminability Gap Attention}\vspace{-1mm}
\label{sec:dga}

While Definition~\ref{def:disc} measures regional discriminability through inverse within-region scatter (lower variance is better for compact, separable regions), DGA operates at a different scale: regional separability manifests \emph{locally} as boundary transitions where feature contrast spikes. We use \emph{local feature contrast} as a differentiable proxy for these transitions. Before computing local contrast, $F_V^l$ is feature-normalized to bound channel-wise scale, preventing the gap estimator from being driven by branch-specific feature magnitudes. For each spatial position $p$ and level $l$:
\begin{equation}
    d_V^l(p) = \frac{1}{|\Omega_k|} \sum_{q \in \Omega_k(p)} \|F_V^l(q) - \bar{F}_V^l(p)\|_2^2, \quad V \in \{I, L, R\}
    \label{eq:local_var}
\end{equation}
where $\bar{F}_V^l(p)$ is the local mean. High variance indicates potential boundary transitions. We then compute the discriminability gap, measuring each component's improvement over the composite:
\begin{equation}
    \Delta_R^l(p) = \text{ReLU}(d_R^l(p) - d_I^l(p)), \quad \Delta_L^l(p) = \text{ReLU}(d_L^l(p) - d_I^l(p)).
    \label{eq:gap}
\end{equation}
Positive $\Delta_R^l(p)$ indicates reflectance transitions invisible in the composite image. These gaps are converted into spatial attention weights $\alpha_R^l = \text{Sigmoid}(\text{Conv}_{3\times3}(\text{Conv}_{1\times1}(\text{Concate}(\Delta_R^l, d_R^l, d_I^l))))$ and $\alpha_L^l$ analogously, which guide the final fusion:
\begin{equation}
    F_\text{fused}^l = F_I^l + \alpha_R^l \odot F_R^l + \alpha_L^l \odot F_L^l.
    \label{eq:fusion}
\end{equation}
\textbf{When no positive discriminability gap is detected} (\eg, in the trivial $\rho \to 1$ regime predicted by Theorem~\ref{thm:main}, or when both $\|\delta_L\|, \|\delta_R\| \to 0$ as in absolute material mimicry under matched lighting), the gap-conditioned attention weights $\alpha_R^l, \alpha_L^l$ are driven toward small values, suppressing reliance on the component features so that DGA recovers composite-only behavior $F_\text{fused}^l \approx F_I^l$. {Otherwise}, component features with positive discriminability gains are selectively enhanced.

\vspace{-1mm}
\subsection{Progressive Decoupled Decoder}\vspace{-1mm}
\label{sec:decoder}

The decoder progressively integrates DGA-fused features: $D^l = \text{ConvBlock}(\text{Up}(D^{l+1}) + F_\text{fused}^l)$ for $l = 3, 2, 1$, where $D^4 = F_\text{fused}^4$ and $\text{ConvBlock}$ consists of two $3\times3$ convolutions with batch normalization and GELU. Three prediction heads operate on $D^1$: a {segmentation head} $\hat{M} = \sigma(\text{Conv}_{1\times1}(D^1))$ as the primary output, a {boundary head} $\hat{B} = \sigma(\text{Conv}_{1\times1}(D^1))$ for edge sharpening, and a {reflectance-boundary head} $\hat{B}_R = \sigma(\text{Conv}_{1\times1}(\text{Conv}_{3\times3}(|\nabla R|)))$ that predicts boundaries directly from reflectance gradients, linking decomposition quality to segmentation accuracy. 

\vspace{-1mm}
\subsection{Loss Functions}\vspace{-1mm}
\label{sec:loss}

\textbf{Segmentation loss.} We use a combination of binary cross-entropy and IoU loss with deep supervision:
\begin{equation}
    \Lseg = \sum_{l=1}^{4} \frac{1}{2^{l-1}}\left[\mathcal{L}_\text{BCE}(\hat{M}^l, M^l_\text{GT}) + \mathcal{L}_\text{IoU}(\hat{M}^l, M^l_\text{GT})\right].
    \label{eq:seg_loss}
\end{equation}

\textbf{Boundary loss.} We supervise both boundary heads using the ground truth boundary $B_\text{GT}$:
\begin{equation}
    \Lbound = \mathcal{L}_\text{BCE}(\hat{B}, B_\text{GT}) +
    % \beta_R \cdot 
    \mathcal{L}_\text{BCE}(\hat{B}_R, B_\text{GT}).
    \label{eq:bnd_loss}
\end{equation}
The reflectance-boundary loss $\mathcal{L}_\text{BCE}(\hat{B}_R, B_\text{GT})$ directly links reflectance gradients with object boundaries, providing explicit supervision for task-driven decomposition.

\textbf{Camouflage-breaking contrastive loss.} We apply a corrected InfoNCE in reflectance feature space using two augmented views of the same input, providing a non-degenerate positive pair:
\begin{equation}
    \Lcontrast = -\log \frac{\exp(\text{sim}(f_{R,a}^+, f_{R,b}^+) / \tau)}{\exp(\text{sim}(f_{R,a}^+, f_{R,b}^+) / \tau) + \sum_{j=1}^{J} \exp(\text{sim}(f_{R,a}^+, f_R^{-,j}) / \tau)},
    \label{eq:contrast_loss}
\end{equation}
where $\tau$ is temperature, $f_{R,a}^+$ and $f_{R,b}^+$ are foreground reflectance features from two augmented views of the same image, computed via normalized masked global average pooling
\[
    f_{R,a}^+ = \frac{\sum_p F_{R,a}^l(p)\,M_\text{GT}^l(p)}{\sum_p M_\text{GT}^l(p) + \epsilon},
\]
$f_R^{-,j}$ are negatives from $J$ randomly sampled background patches and $\text{sim}(\cdot,\cdot)$ is cosine similarity.

\textbf{Why $R$-space rather than $I$-space?} In $I$-space, visual entanglement enforces $f_I^+ \approx f_I^-$, so the InfoNCE positive-negative gap collapses and the loss provides little discriminative signal. The reflectance space, by contrast, encodes intrinsic material differences that survive entanglement (Theorem~\ref{thm:main}), giving contrastive learning a physically grounded foundation for foreground-background separation. We empirically verify this in~\cref{sec:analysis}.

\textbf{Total loss.} The complete objective is formulated as:
\begin{equation}
    \mathcal{L} = \Lseg +  \Lretinex +  \Lbound + \Lcontrast.
    \label{eq:total_loss}
\end{equation}

\vspace{-1mm}
\section{Experiments}\vspace{-1mm}
\label{sec:exp}
\textbf{Implementation details.} Our RIDE is implemented in PyTorch on two RTX4090 GPUs. Following \cite{He2023Camouflaged,he2025run}, our encoder  (ResNet50 by default) is pre-trained on ImageNet~\cite{deng2009imagenet}. During the training phase, we select Adam with momentum terms $(0.9,0.999)$ and resize all image as $352\times 352$. We set the batch size as 36 and initialize the learning rate as 0.0001, dividing by 10 every 80 epochs.
The TRD module uses a 5-level U-Net with channels $\{32, 64, 128, 64, 32\}$. For DGA, we set $k=7$ for local variance computation. Contrastive loss temperature $\tau = 0.1$.
All results are from published papers or reproduced using official code. Datasets and metrics are shown in Appendix \ref{app:datasets}.

\vspace{-1mm}
\subsection{Comparative evaluation}\vspace{-1mm}
\label{sec:sota}

\begin{table*}[tbp!]
\begin{minipage}{1\textwidth}
\setlength{\abovecaptionskip}{0cm} 
		\setlength{\belowcaptionskip}{0cm}
		\centering
            \caption{Results on camouflaged object detection.  
            } \label{table:CODQuanti}
            % \vspace{-2mm}
		\resizebox{\columnwidth}{!}{
			\setlength{\tabcolsep}{1.4mm}
			\begin{tabular}{l|c|cccc|cccc|cccc|cccc} 
				\toprule
\multicolumn{1}{c|}{}                                        & \multicolumn{1}{c|}{}                           & \multicolumn{4}{c|}{\textit{CHAMELEON} }                                                                                                                                         & \multicolumn{4}{c|}{\textit{CAMO} }                                                                                                                                             & \multicolumn{4}{c|}{\textit{COD10K} }                                                                                                                                          & \multicolumn{4}{c}{\textit{NC4K} }                                                                                                                        \\ \cline{3-18} 
\multicolumn{1}{l|}{\multirow{-2}{*}{Methods}} & \multicolumn{1}{c|}{\multirow{-2}{*}{Backbones}} & {\cellcolor{gray!40}$M$~$\downarrow$}                                  & {\cellcolor{gray!40}$F_\beta$~$\uparrow$}                               & {\cellcolor{gray!40}$E_\phi$~$\uparrow$}                               & \multicolumn{1}{c|}{\cellcolor{gray!40}$S_\alpha$~$\uparrow$}                                   & {\cellcolor{gray!40}$M$~$\downarrow$}                                  & {\cellcolor{gray!40}$F_\beta$~$\uparrow$}                               & {\cellcolor{gray!40}$E_\phi$~$\uparrow$}                               & \multicolumn{1}{c|}{\cellcolor{gray!40}$S_\alpha$~$\uparrow$}                                   & {\cellcolor{gray!40}$M$~$\downarrow$}                                  & {\cellcolor{gray!40}$F_\beta$~$\uparrow$}                               & {\cellcolor{gray!40}$E_\phi$~$\uparrow$}                               & \multicolumn{1}{c|}{\cellcolor{gray!40}$S_\alpha$~$\uparrow$}                                   & {\cellcolor{gray!40}$M$~$\downarrow$}                                  & {\cellcolor{gray!40}$F_\beta$~$\uparrow$}                               & {\cellcolor{gray!40}$E_\phi$~$\uparrow$}                               & \multicolumn{1}{c}{\cellcolor{gray!40}$S_\alpha$~$\uparrow$}                                   \\ \midrule 
\multicolumn{1}{l|}{FEDER~\cite{He2023Camouflaged}} & \multicolumn{1}{c|}{ResNet50}  & { {0.028}} & { {0.850}} & 0.944 & \multicolumn{1}{c|}{0.892} & {{0.070}} & {0.775} & 0.870 & \multicolumn{1}{c|}{0.802} & 0.032 & 0.715 & 0.892 & \multicolumn{1}{c|}{0.810} & {{0.046}} & {{0.808}} & {{0.900}} & {{0.842}} \\
\multicolumn{1}{l|}{FGANet~\cite{zhaiexploring}}                     & \multicolumn{1}{c|}{ResNet50}                   & 0.030                                 & 0.838                                 & {{0.945}}                                 & 0.891                                 & 0.070  & 0.769  & 0.865  & \multicolumn{1}{c|}{0.800}  & 0.032   & 0.708 & 0.894  & \multicolumn{1}{c|}{0.803}                                 & 0.047                                & 0.800                                 & 0.891                                 & 0.837                                 \\
\multicolumn{1}{l|}{FocusDiff~\cite{zhao2025focusdiffuser}} & \multicolumn{1}{c|}{ResNet50} & {{0.028}} & 0.843 & 0.938 & 0.890  & {{0.069}} & 0.772 & {{0.883}} & {{0.812}} & {{0.031}} & {{0.730}} & {{0.897}} & 0.820 & {{0.044}} & {{0.810}} & {{0.902}} &{{0.850}}    \\ 
\multicolumn{1}{l|}{FSEL~\cite{sun2025frequency}} & \multicolumn{1}{c|}{ResNet50} & 0.029 & 0.847 & 0.941 & {{0.893}}  & {{{0.069}}} & {{0.779}} & {0.881} & {{{0.816}}} & 0.032 & 0.722 & 0.891 & {{0.822}} & 0.045 & 0.807 & 0.901 & 0.847    \\ 
\multicolumn{1}{l|}{RUN~\cite{he2025run}}  & \multicolumn{1}{c|}{ResNet50} & {{0.027}} & {{0.855}} & {{0.952}} & {{0.895}} & 0.070 & {{0.781}} & 0.868 & 0.806 & {0.030} & {{0.747}} & {{0.903}} & {{0.827}} & {0.042} & {{0.824}} & {{0.908}} & {{0.851}}     \\
\rowcolor{c2!20} RIDE (Ours) & ResNet50 & \textbf{0.026} & \textbf{0.868} & \textbf{0.958} & \textbf{0.900} & \textbf{0.068} & \textbf{0.790} & \textbf{0.891} & \textbf{0.817} & \textbf{0.028} & \textbf{0.763} & \textbf{0.912} & \textbf{0.834} & \textbf{0.041} & \textbf{0.831} & \textbf{0.916} & \textbf{0.858} \\
                \midrule
\multicolumn{1}{l|}{BSA-Net~\cite{zhu2022can}}            & \multicolumn{1}{c|}{Res2Net50}                  & 0.027                                 & 0.851                                 & 0.946                                 & \multicolumn{1}{c|}{0.895}                                 & 0.079                                 & 0.768                                 & 0.851                                 & \multicolumn{1}{c|}{0.796}                                 & 0.034                                 & 0.723                                 & 0.891                                 & \multicolumn{1}{c|}{0.818}                                 & 0.048                                 & 0.805                                 & 0.897                                 & 0.841                                 \\
\multicolumn{1}{l|}{RUN~\cite{he2025run}}  & \multicolumn{1}{c|}{Res2Net50} & {0.024} & {0.879} & {0.956} & {0.907} & {{0.066}} & {0.815} & {0.905} & {{0.843}}  & {0.028} & {0.764} & {{0.914}} & {{0.849}} & {0.041} & {0.830} & {0.917} &0.859                \\
\rowcolor{c2!20} RIDE (Ours) & Res2Net50 & \textbf{0.023} & \textbf{0.886} & \textbf{0.960} & \textbf{0.911} & \textbf{0.065} & \textbf{0.826} & \textbf{0.912} & \textbf{0.848} & \textbf{0.025} & \textbf{0.783} & \textbf{0.922} & \textbf{0.855} & \textbf{0.039} & \textbf{0.838} & \textbf{0.924} & \textbf{0.866}
                \\ 
                \midrule
CamoDiff~\cite{sun2025conditional} & PVT V2 & 0.022 & 0.868 & 0.952 & 0.908 & {0.042} & 0.853 & 0.936 & 0.878 & {0.019} & 0.815 & 0.943 & 0.883 & {0.028} & 0.858 & 0.942 & 0.895 \\
\multicolumn{1}{l|}{RUN~\cite{he2025run}} &  \multicolumn{1}{c|}{PVT V2} & {{0.021}} & {0.877} & {{0.958}} & {{0.916}} & 0.045 &  {0.861} &  {0.934} & {0.877} & {0.021} &  0.810 & {0.941} &  {0.878} & {0.030} & {0.868} & {0.940} & {0.892} \\ 
\rowcolor{c2!20} RIDE (Ours) & PVT V2 & \textbf{0.020} & \textbf{0.886} & \textbf{0.965} & \textbf{0.920} & \textbf{0.041} & \textbf{0.868} & \textbf{0.940} & \textbf{0.887} & \textbf{0.018} & \textbf{0.838} & \textbf{0.950} & \textbf{0.889} & \textbf{0.027} & \textbf{0.883} & \textbf{0.948} & \textbf{0.898}
                \\ \midrule
VSCode~\cite{luo2024vscode} & DINOv2-L & 0.020 & 0.882 & 0.954 & 0.915 & 0.043 & 0.860 & 0.931 & 0.882 & 0.020 & 0.818 & 0.935 & 0.880 & 0.030 & 0.867 & 0.940 & 0.895 \\
C3Net~\cite{jan2025c3net} & DINOv2-L & 0.016 & 0.905 & 0.968 & 0.927 & 0.031 & 0.892 & 0.953 & 0.904 & 0.016 & 0.852 & 0.961 & 0.898 & 0.024 & 0.894 & 0.958 & 0.913 \\
\rowcolor{c2!20} {{RIDE (Ours)}} & {\textit{DINOv2-L}} & \textbf{0.014} & \textbf{0.920} & \textbf{0.978} & \textbf{0.933} & \textbf{0.028} & \textbf{0.911} & \textbf{0.967} & \textbf{0.912} & \textbf{0.013} & \textbf{0.887} & \textbf{0.977} & \textbf{0.906} & \textbf{0.019} & \textbf{0.914} & \textbf{0.969} & \textbf{0.921} \\
                \bottomrule            
		\end{tabular}}  
\end{minipage}
\vspace{-4mm}
\end{table*} 
\begin{figure*}[t]
\newlength{\subw}\setlength{\subw}{0.116\textwidth}  %<<< Change this single number to resize ALL sub-images in THIS figure
\begin{minipage}{\textwidth}
\setlength{\abovecaptionskip}{0cm}
\setlength{\belowcaptionskip}{0cm}
\centering
\begin{subfigure}{\subw}
\centering
\includegraphics[width=\textwidth]{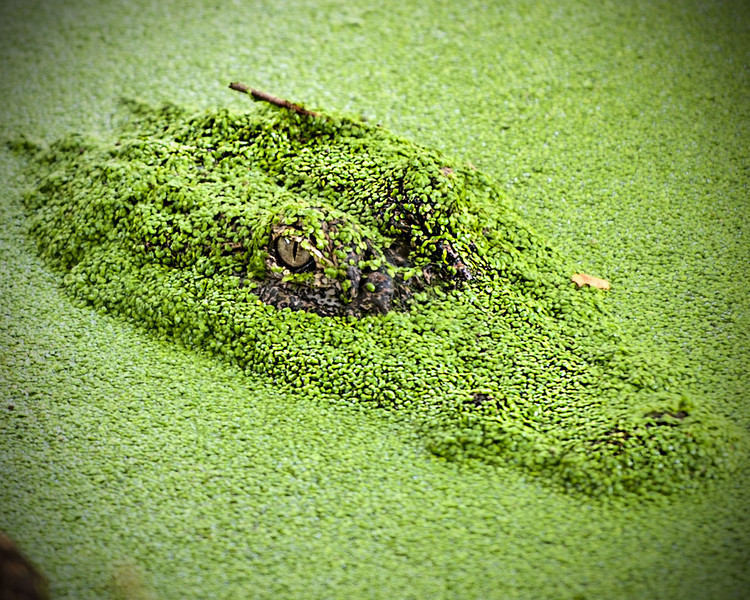}\vspace{-2pt}
\end{subfigure}
\begin{subfigure}{\subw}
\centering
\includegraphics[width=\textwidth]{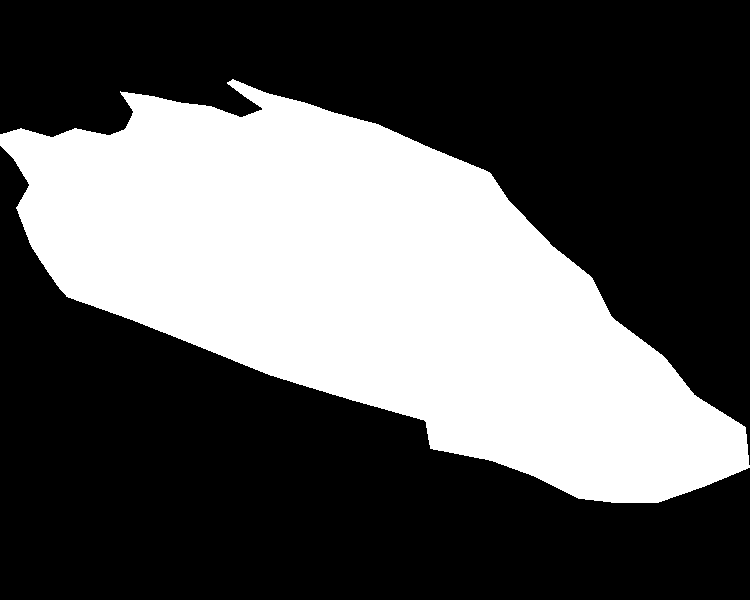}\vspace{-2pt}
\end{subfigure}
\begin{subfigure}{\subw}
\centering
\includegraphics[width=\textwidth]{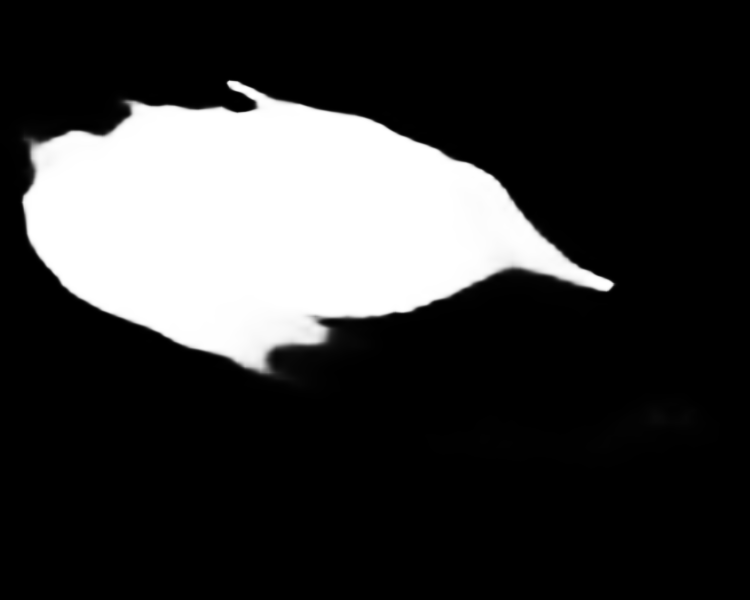}\vspace{-2pt}
\end{subfigure}
\begin{subfigure}{\subw}
\centering
\includegraphics[width=\textwidth]{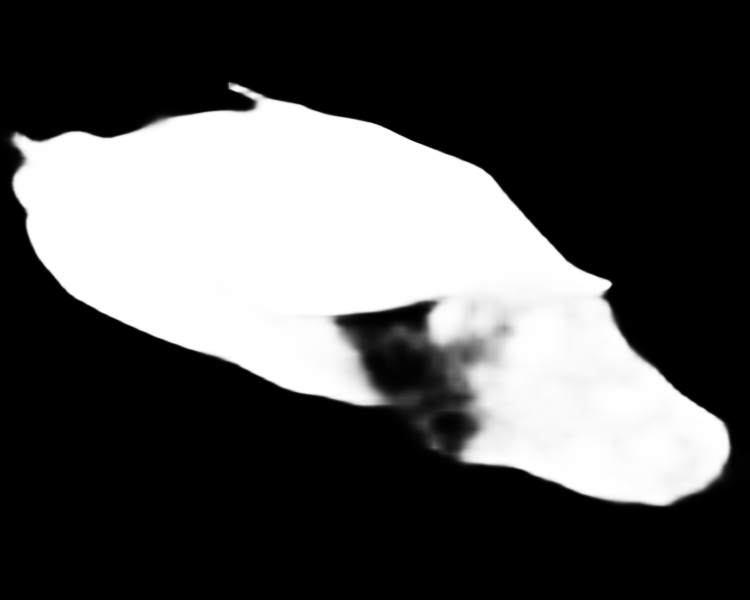}\vspace{-2pt}
\end{subfigure}
\begin{subfigure}{\subw}
\centering
\includegraphics[width=\textwidth]{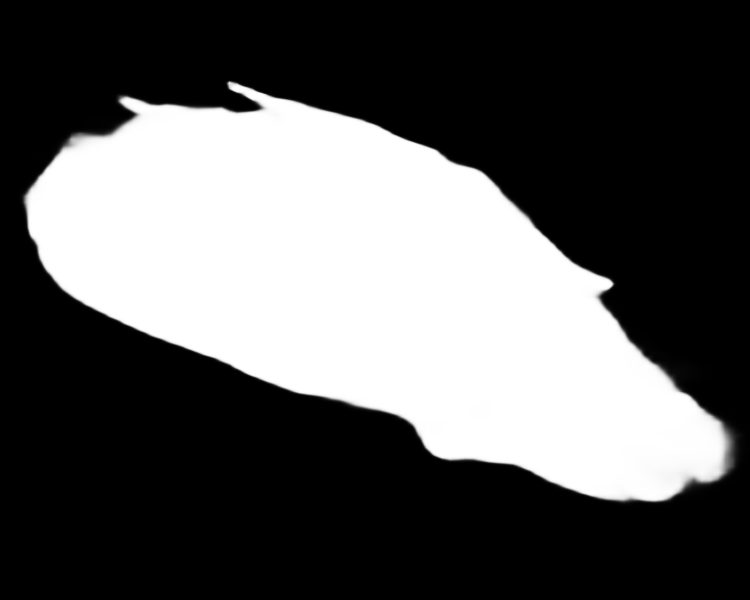}\vspace{-2pt}
\end{subfigure}
\begin{subfigure}{\subw}
\centering
\includegraphics[width=\textwidth]{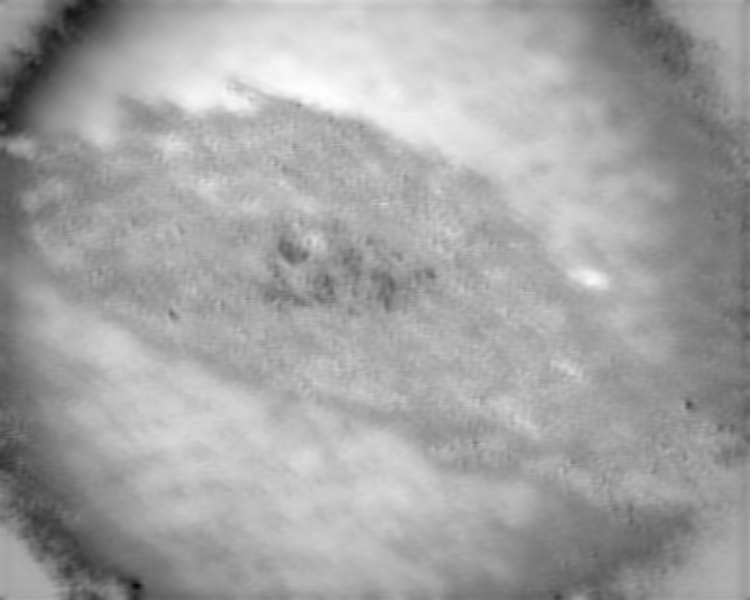}\vspace{-2pt}
\end{subfigure}
\begin{subfigure}{\subw}
\centering
\includegraphics[width=\textwidth]{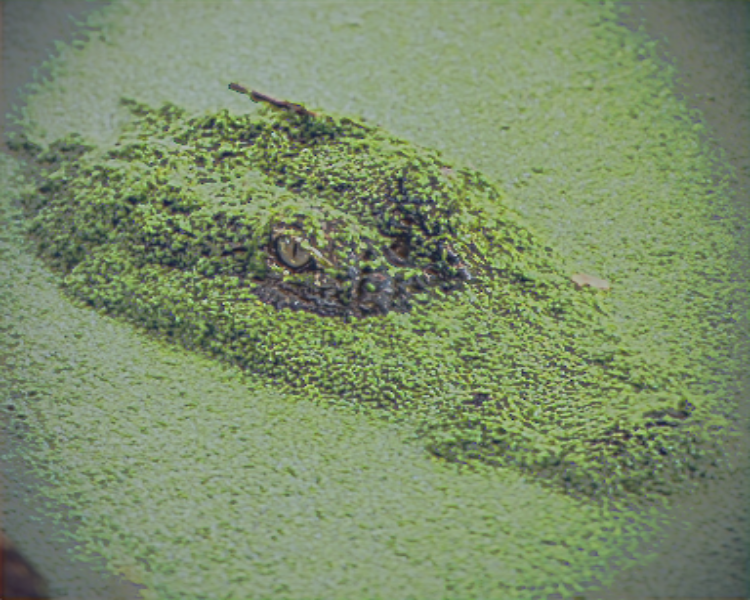}\vspace{-2pt}
\end{subfigure}
\begin{subfigure}{\subw}
\centering
\includegraphics[width=\textwidth]{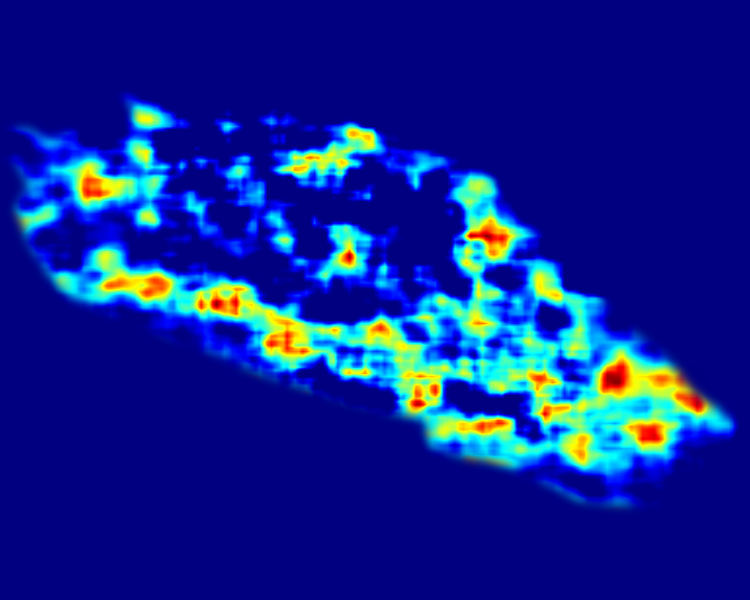}\vspace{-2pt}
\end{subfigure}\\ \vspace{1mm}
\begin{subfigure}{\subw}
\centering
\includegraphics[width=\textwidth]{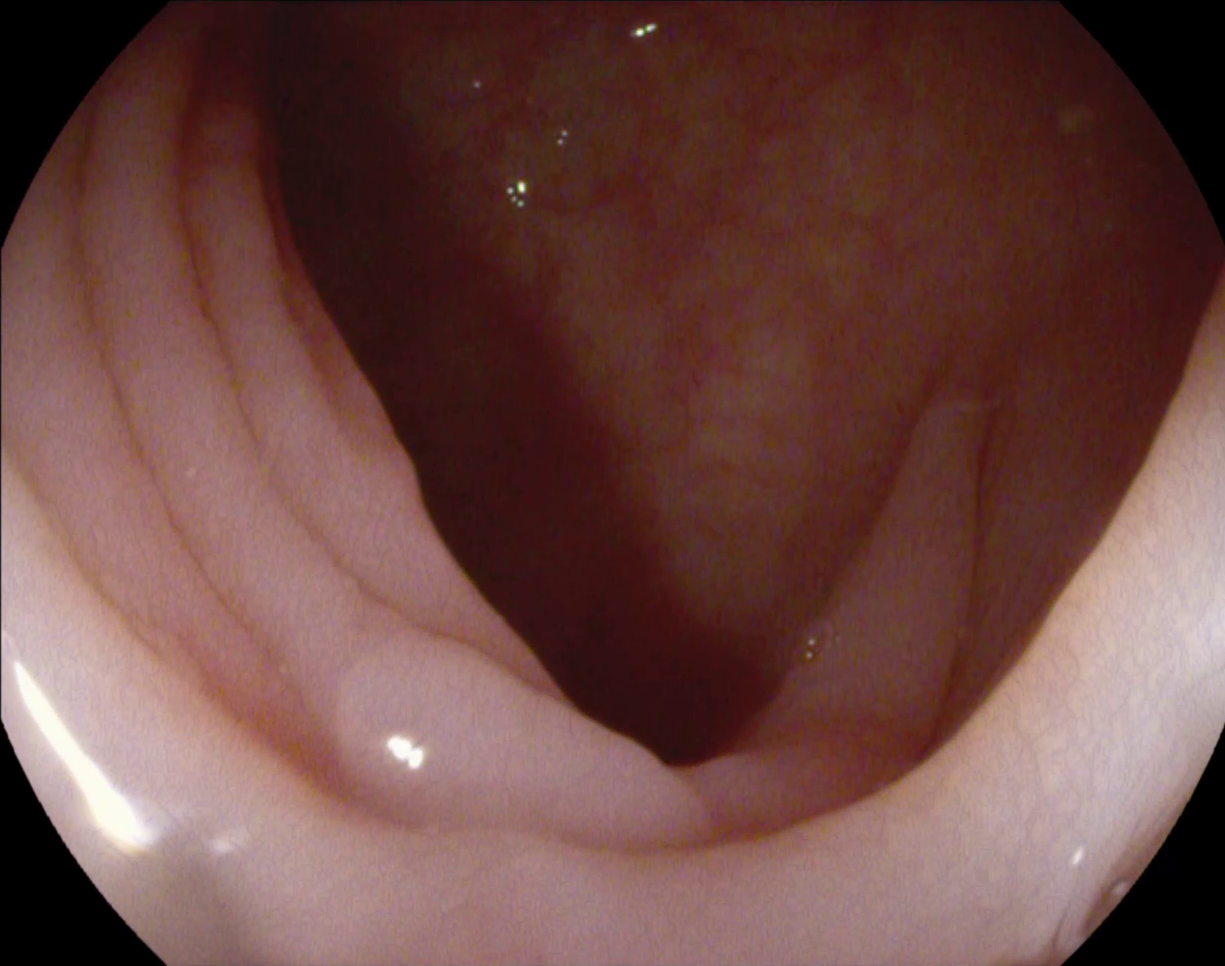}\vspace{-2pt}
\end{subfigure}
\begin{subfigure}{\subw}
\centering
\includegraphics[width=\textwidth]{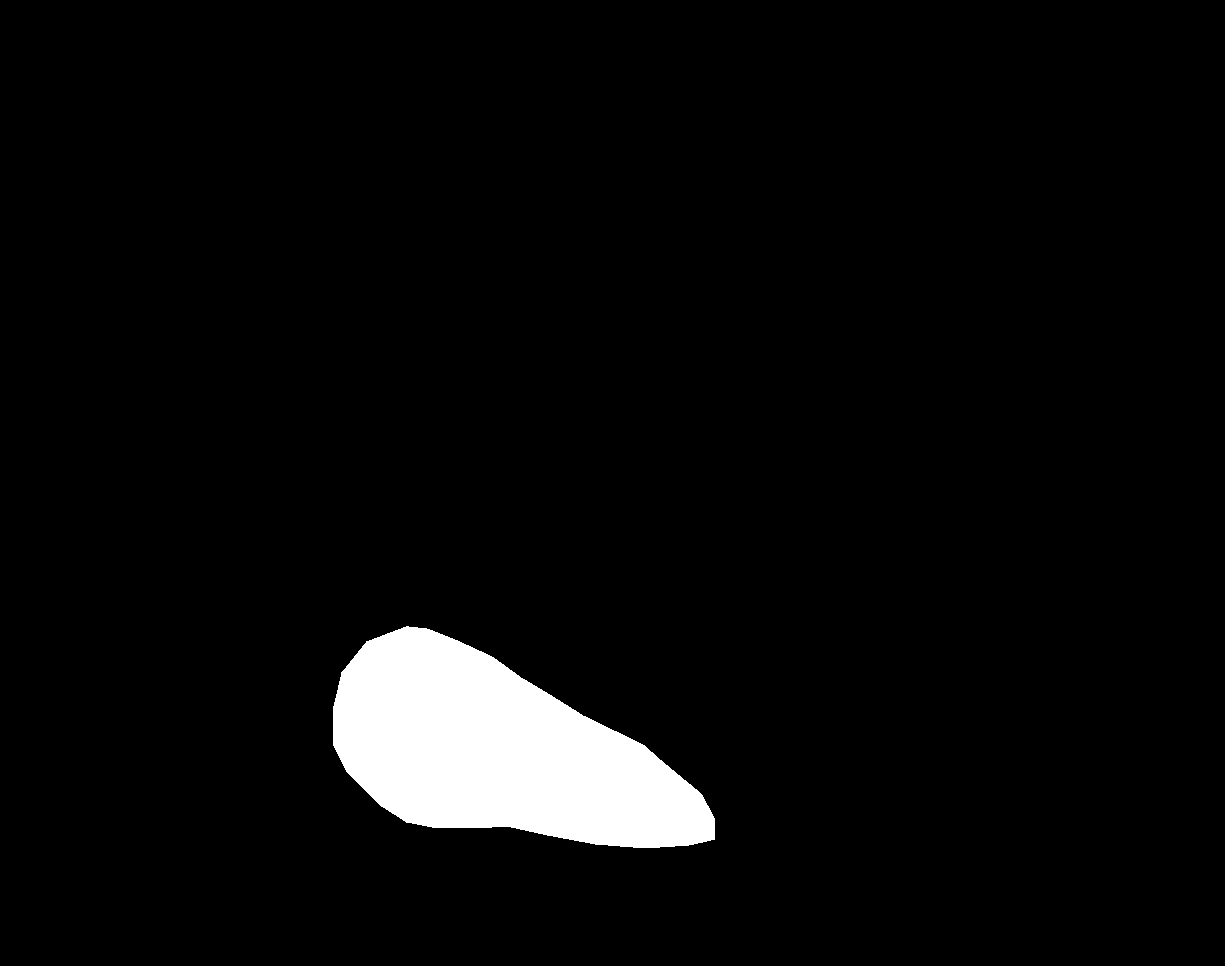}\vspace{-2pt}
\end{subfigure}
\begin{subfigure}{\subw}
\centering
\includegraphics[width=\textwidth]{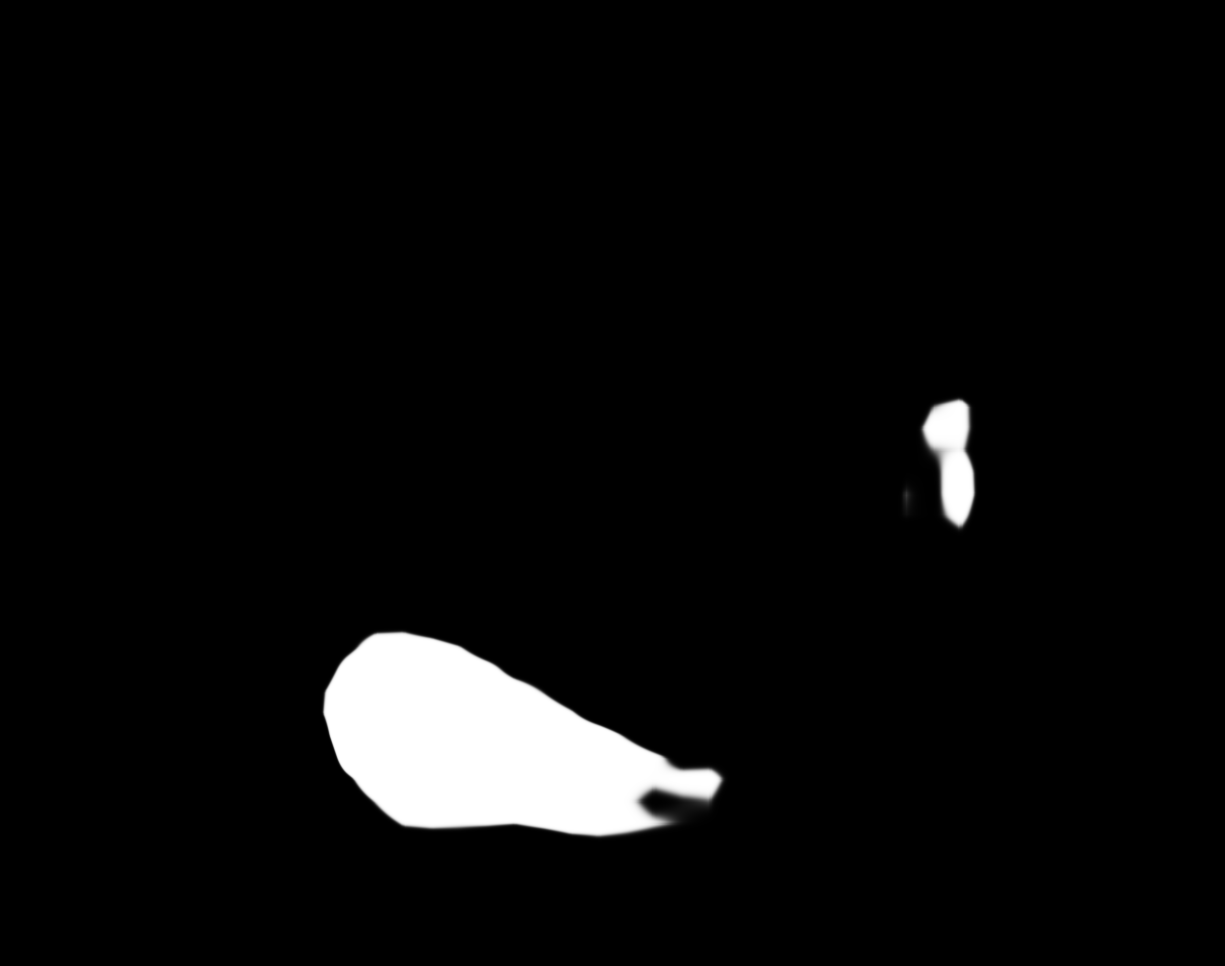}\vspace{-2pt}
\end{subfigure}
\begin{subfigure}{\subw}
\centering
\includegraphics[width=\textwidth]{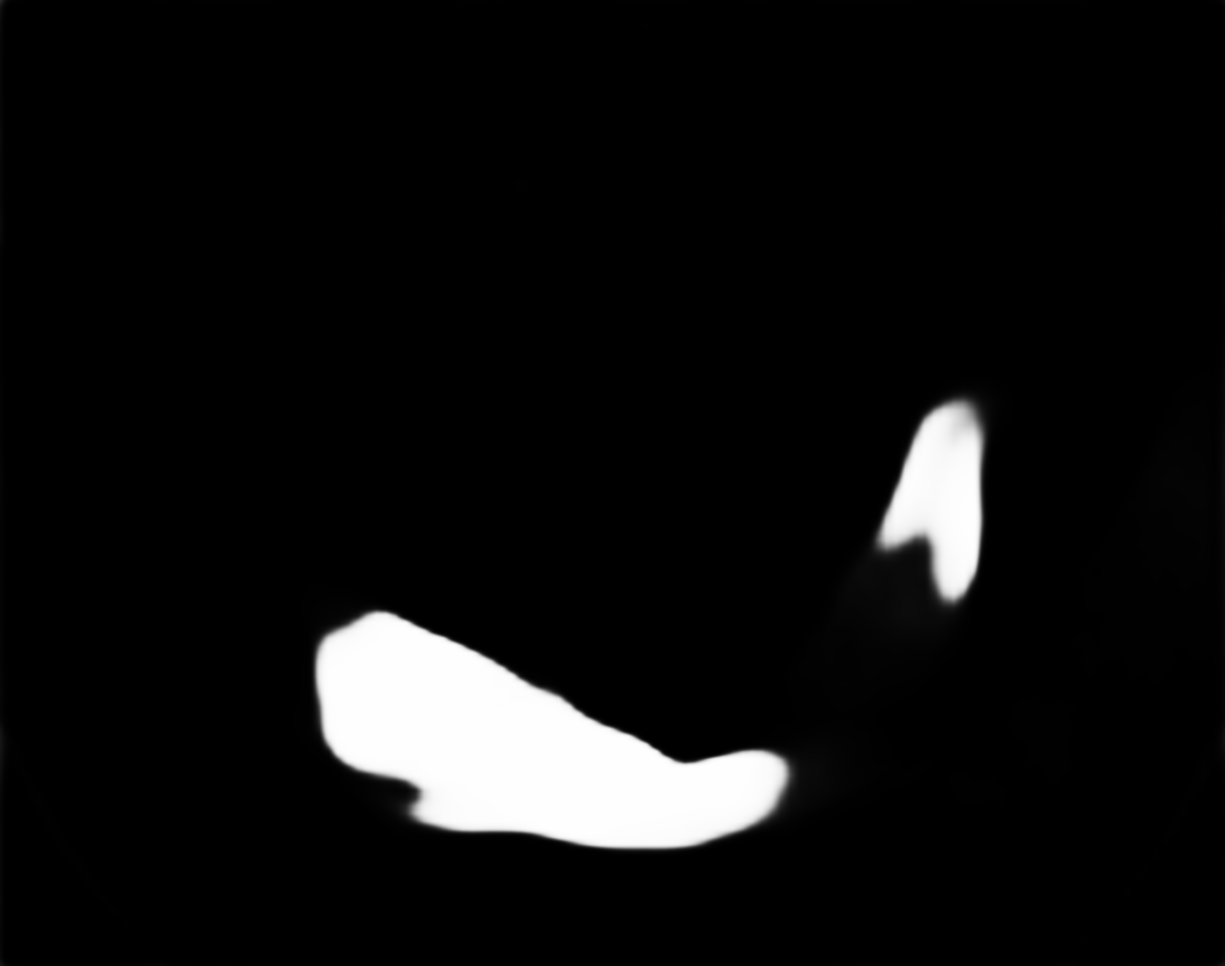}\vspace{-2pt}
\end{subfigure}
\begin{subfigure}{\subw}
\centering
\includegraphics[width=\textwidth]{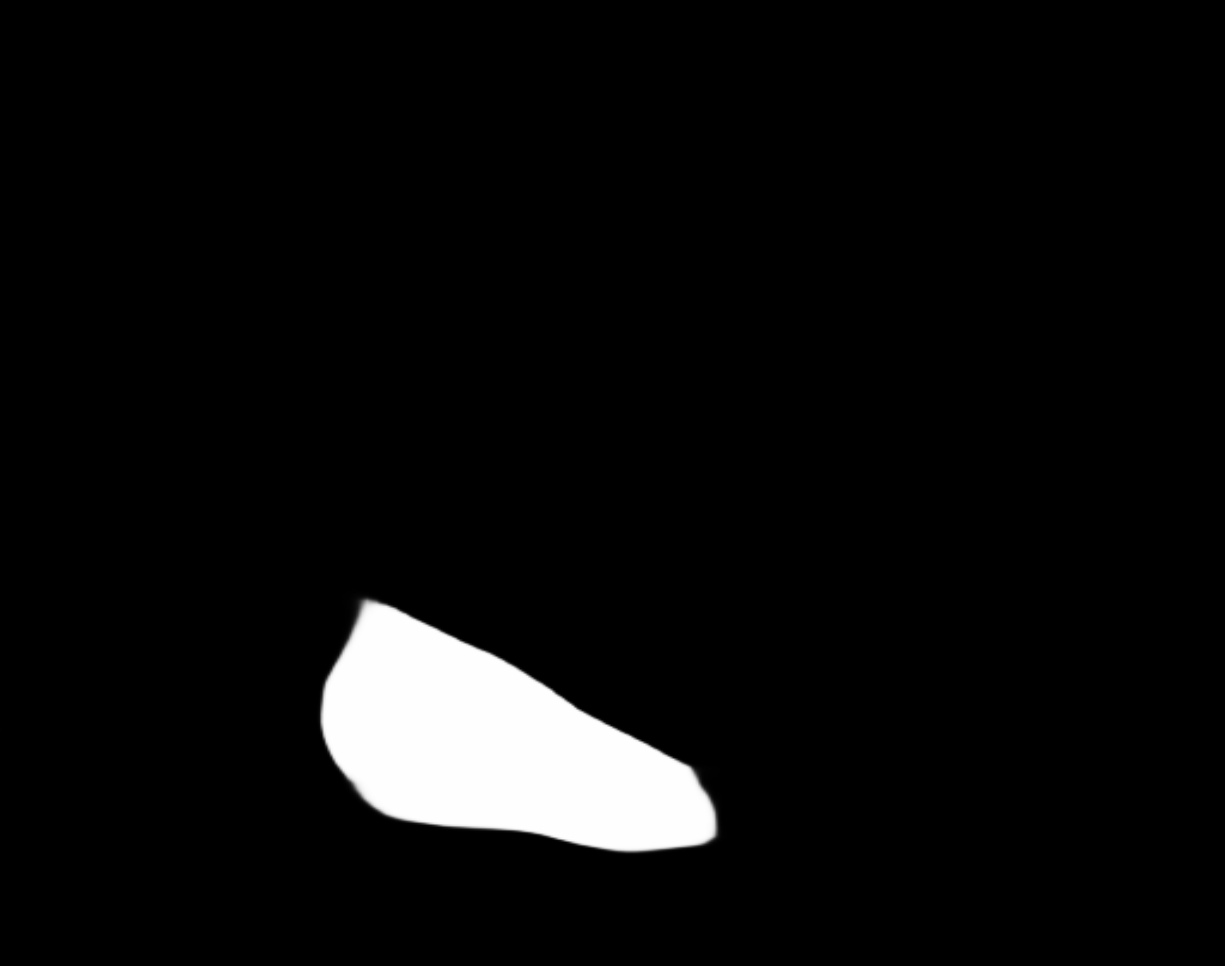}\vspace{-2pt}
\end{subfigure}
\begin{subfigure}{\subw}
\centering
\includegraphics[width=\textwidth]{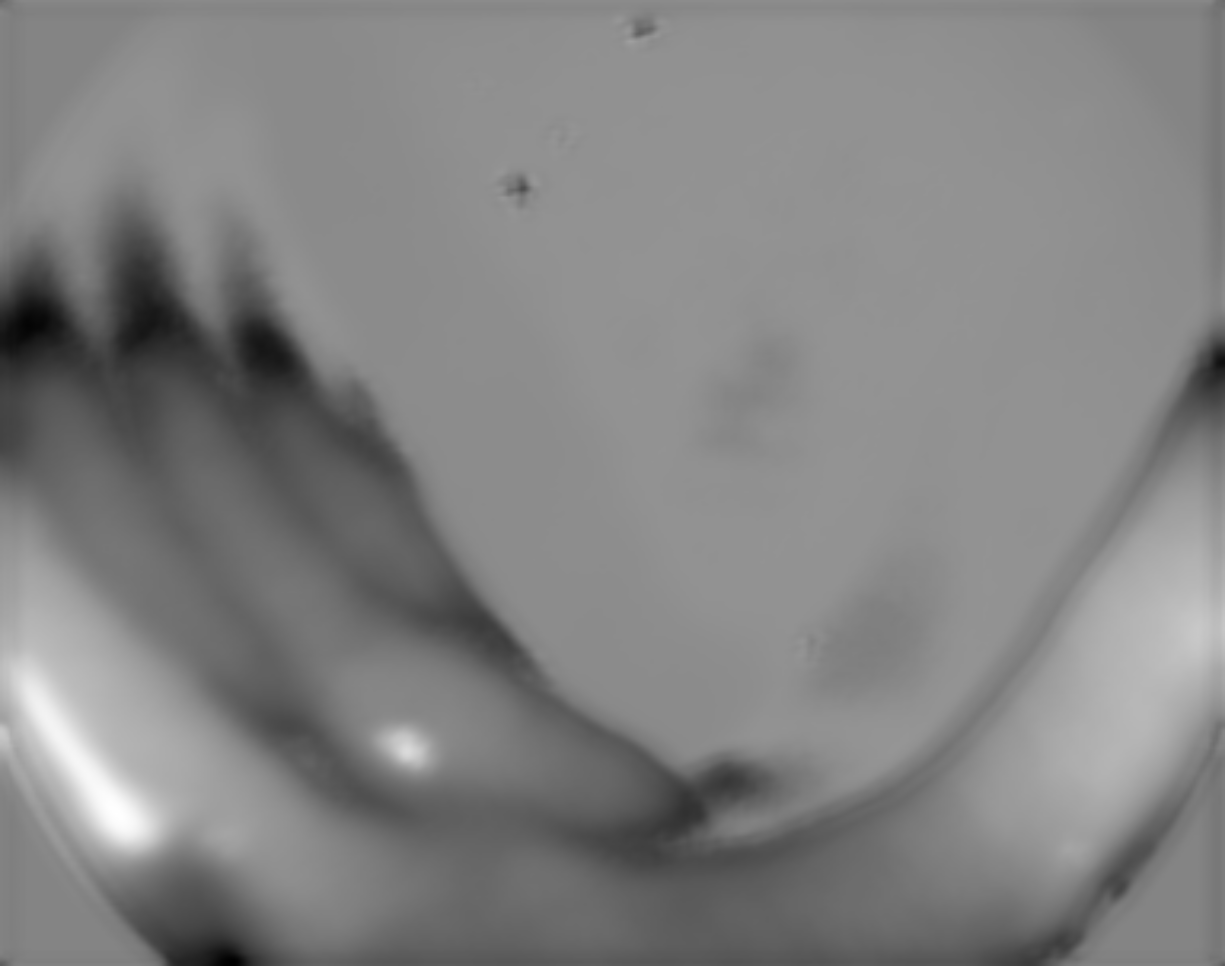}\vspace{-2pt}
\end{subfigure}
\begin{subfigure}{\subw}
\centering
\includegraphics[width=\textwidth]{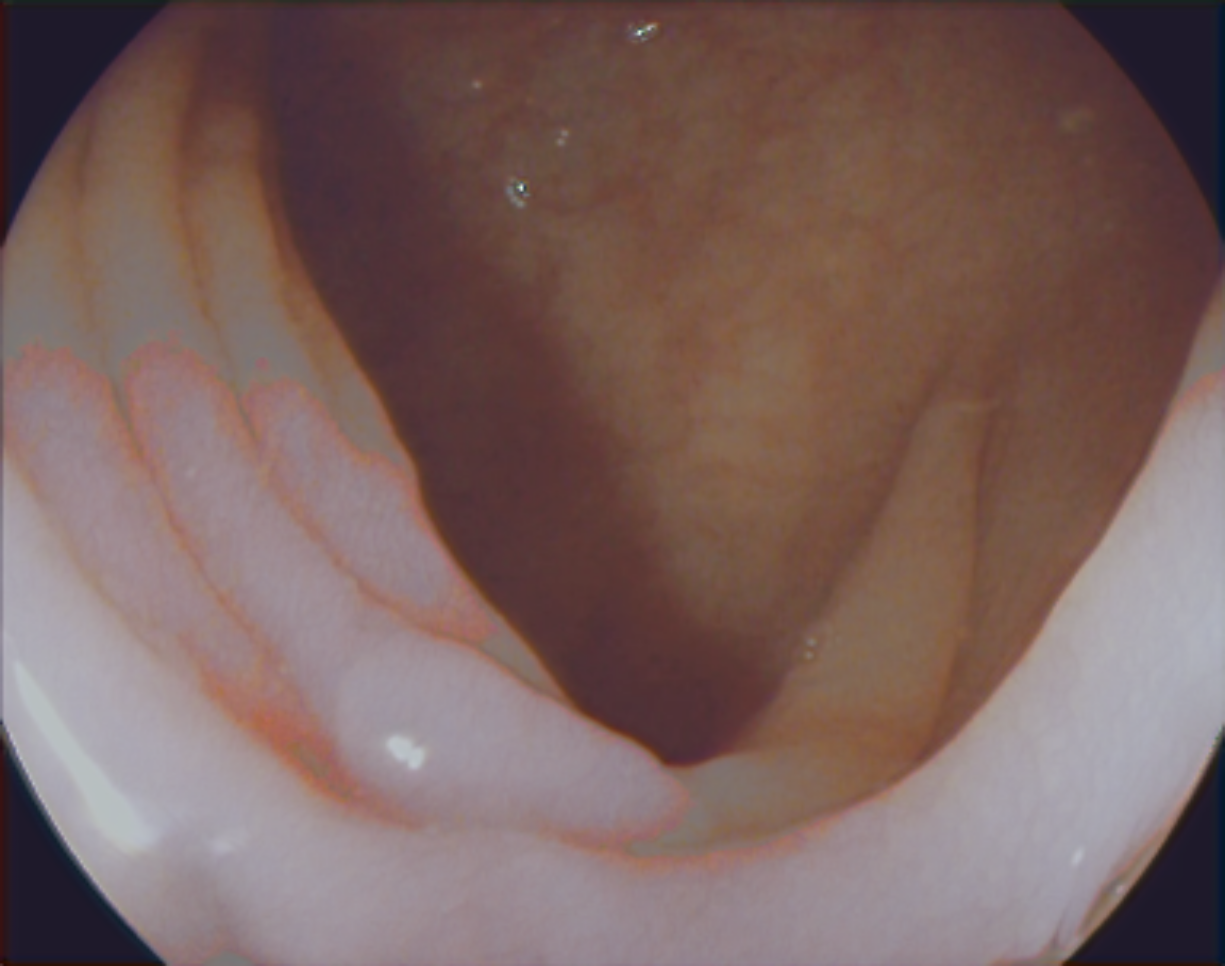}\vspace{-2pt}
\end{subfigure}
\begin{subfigure}{\subw}
\centering
\includegraphics[width=\textwidth]{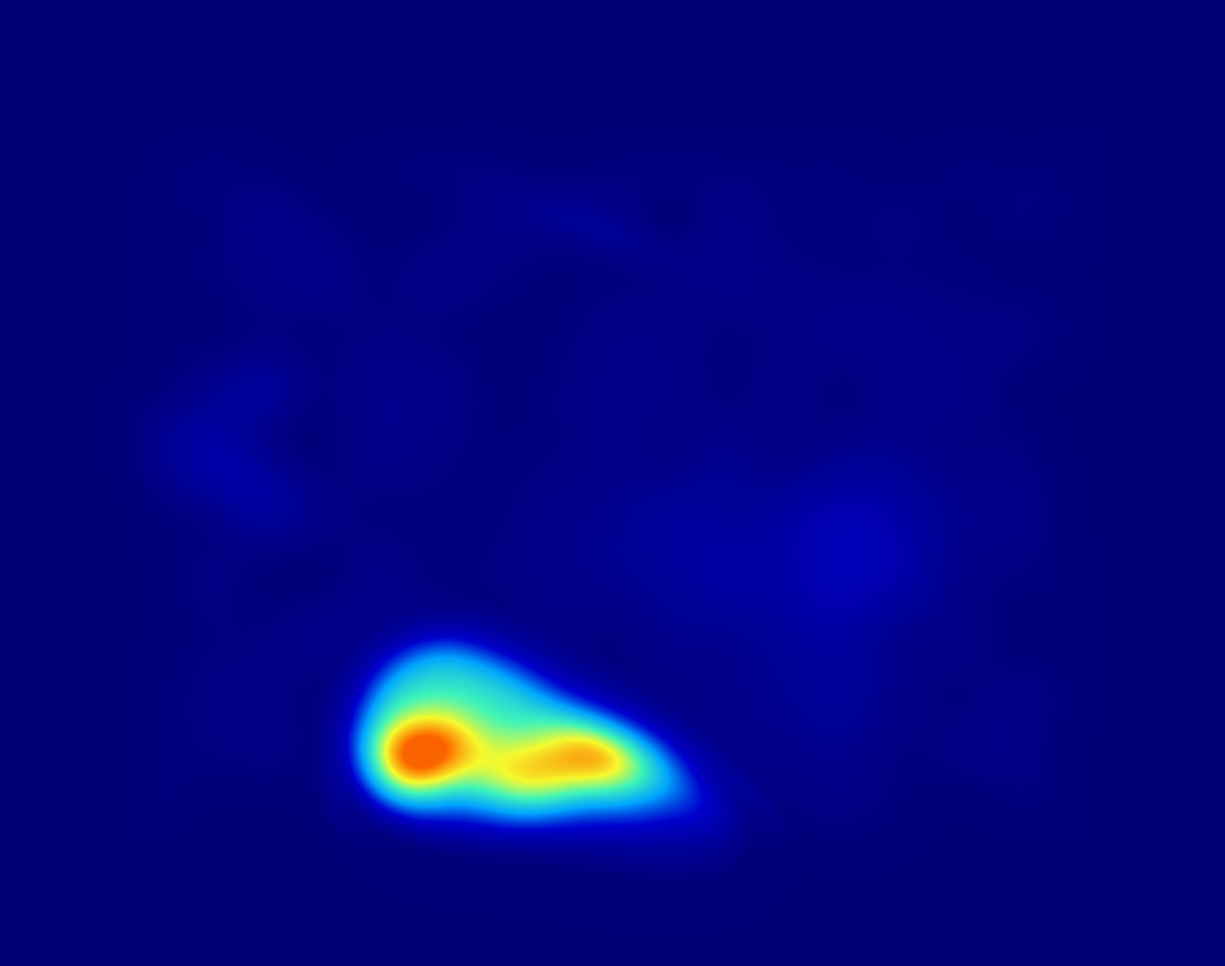}\vspace{-2pt}
\end{subfigure}\\ \vspace{1mm}
\begin{subfigure}{\subw}
\centering
\includegraphics[width=\textwidth]{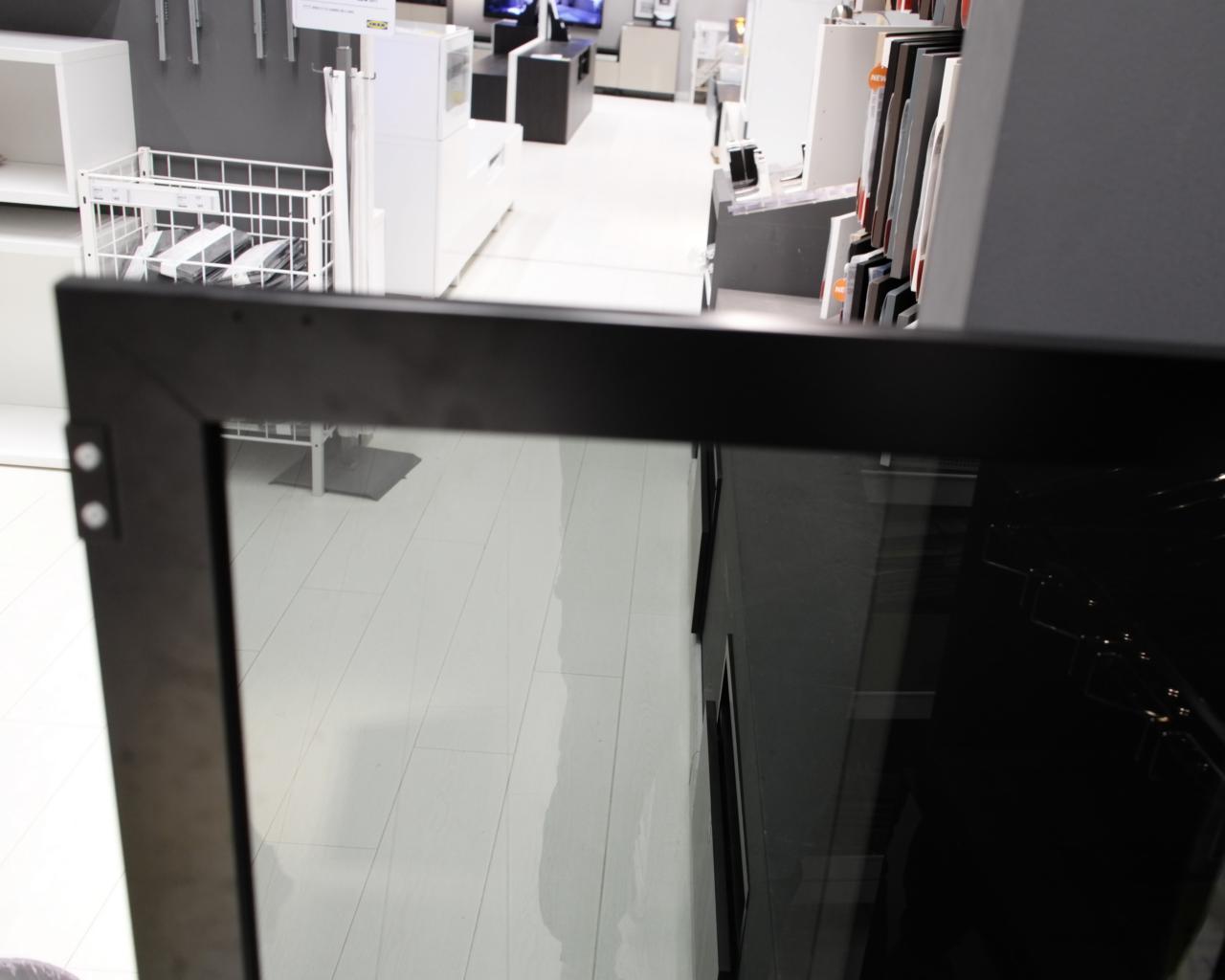}\vspace{-2pt}
\end{subfigure}
\begin{subfigure}{\subw}
\centering
\includegraphics[width=\textwidth]{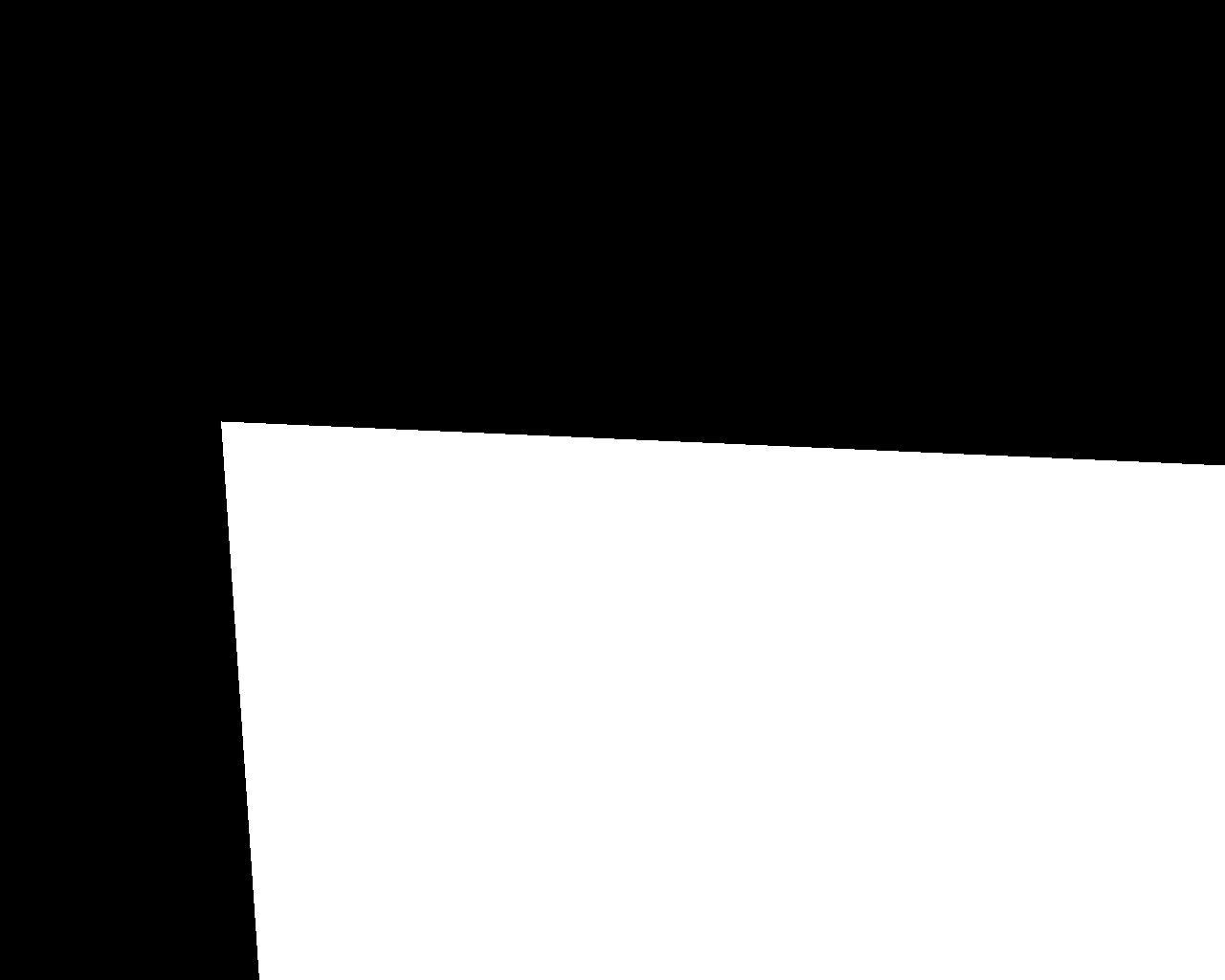}\vspace{-2pt}
\end{subfigure}
\begin{subfigure}{\subw}
\centering
\includegraphics[width=\textwidth]{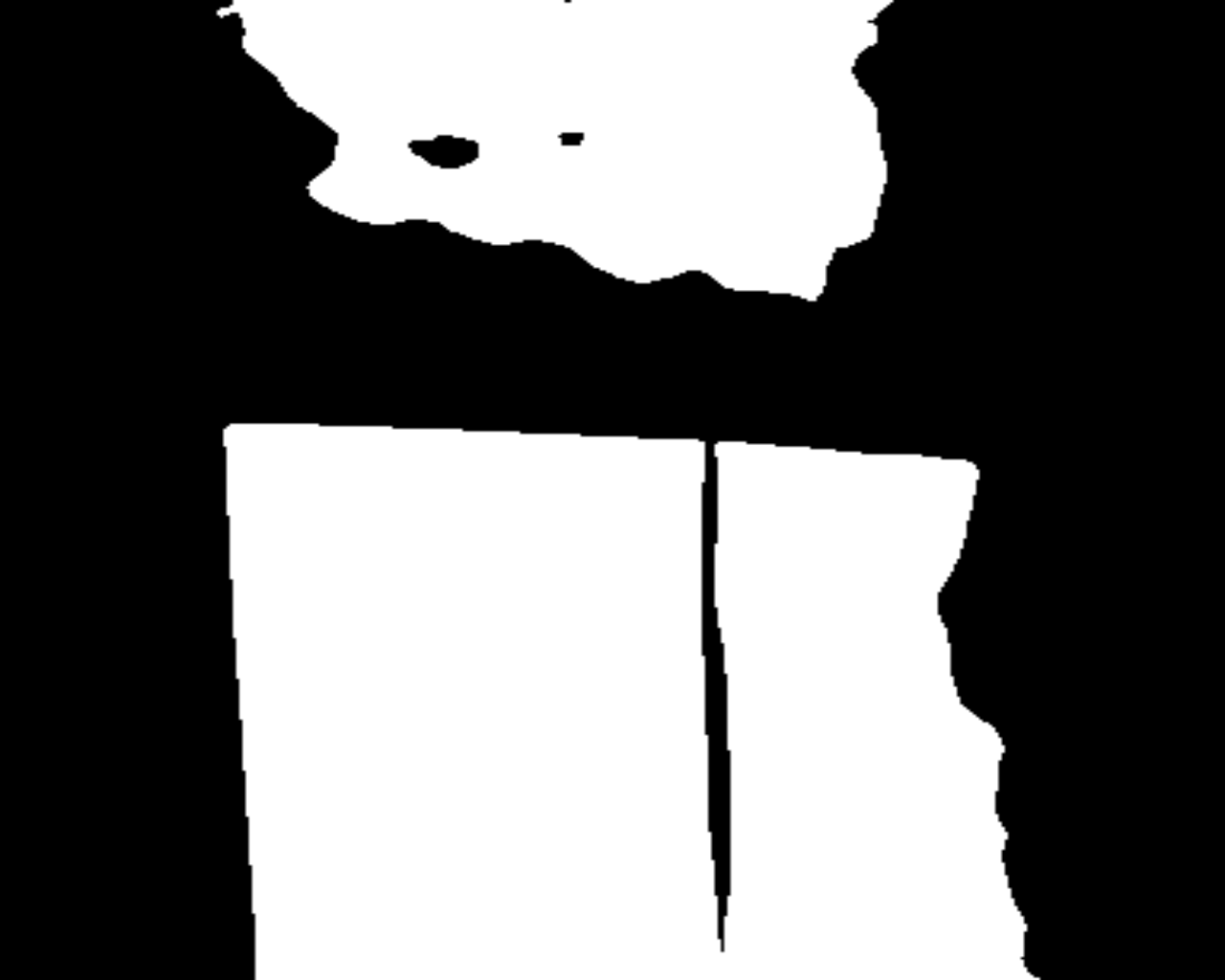}\vspace{-2pt}
\end{subfigure}
\begin{subfigure}{\subw}
\centering
\includegraphics[width=\textwidth]{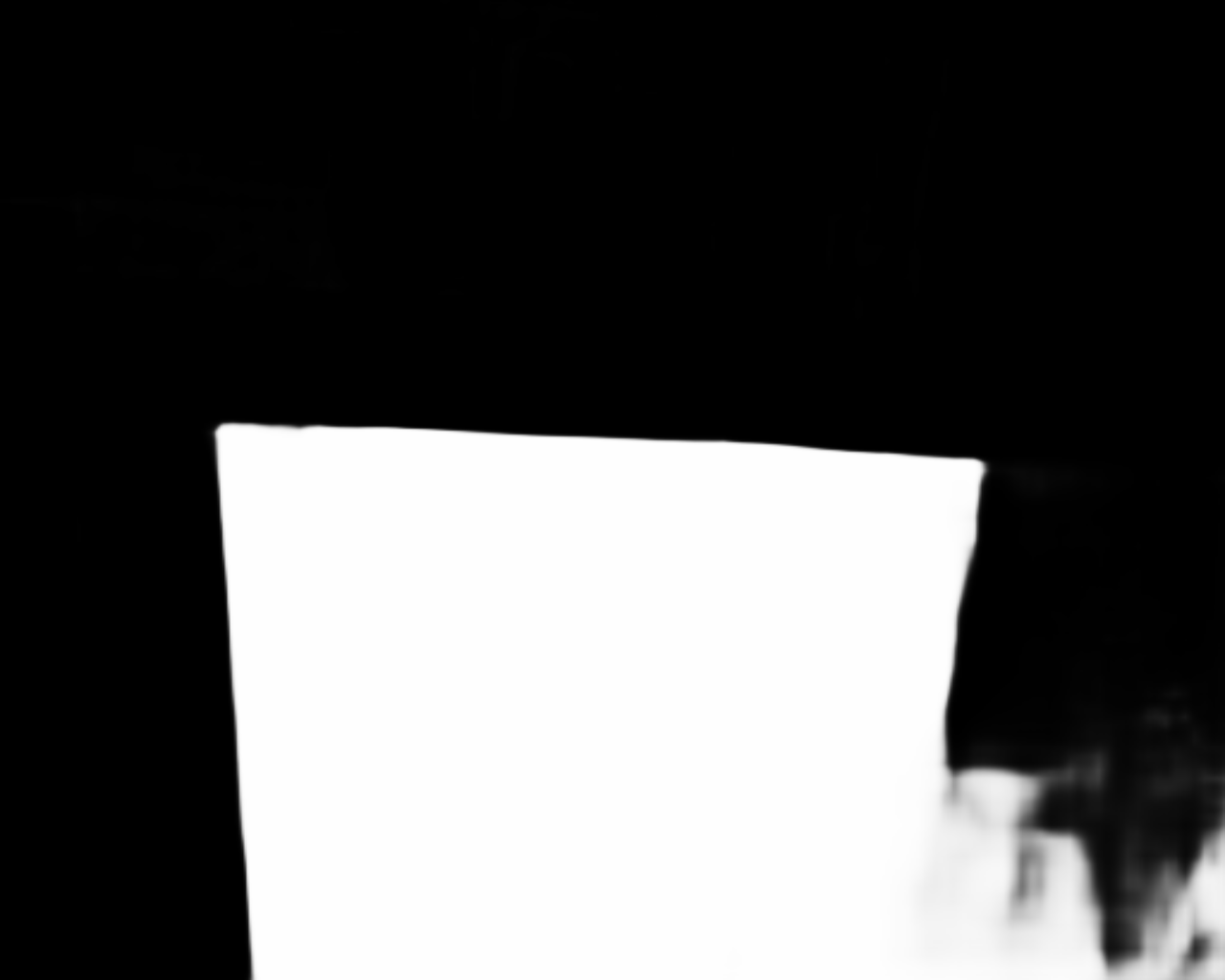}\vspace{-2pt}
\end{subfigure}
\begin{subfigure}{\subw}
\centering
\includegraphics[width=\textwidth]{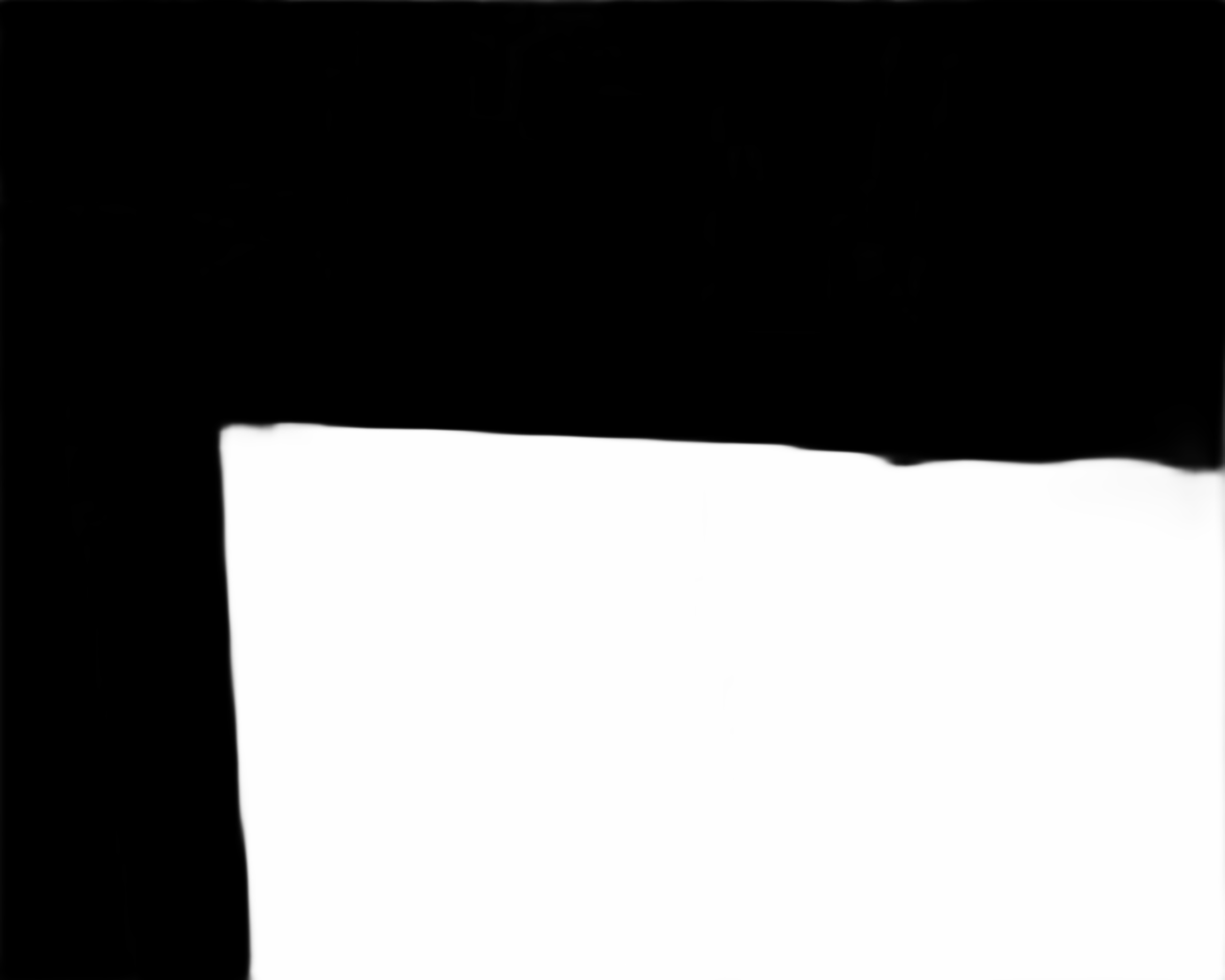}\vspace{-2pt}
\end{subfigure}
\begin{subfigure}{\subw}
\centering
\includegraphics[width=\textwidth]{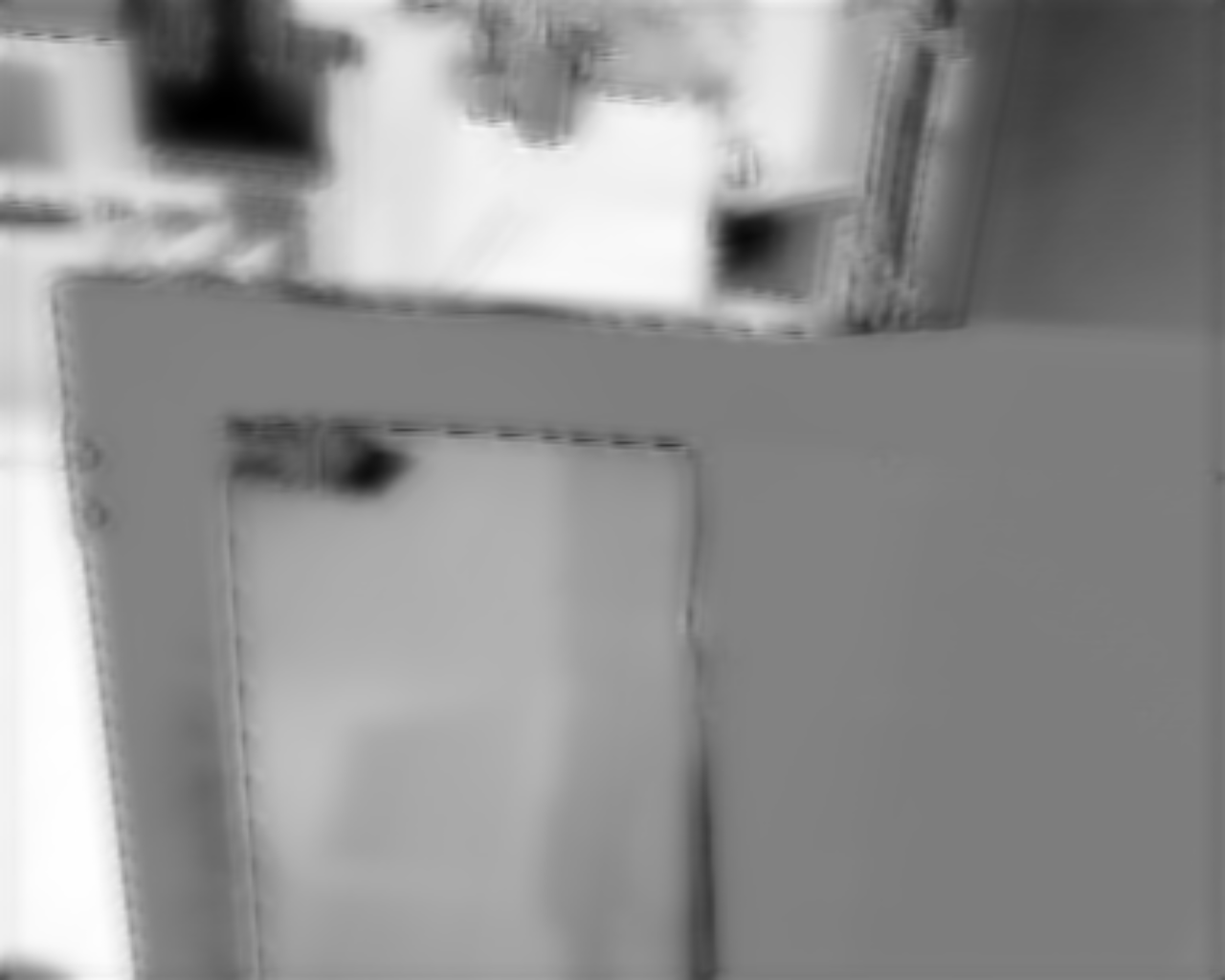}\vspace{-2pt}
\end{subfigure}
\begin{subfigure}{\subw}
\centering
\includegraphics[width=\textwidth]{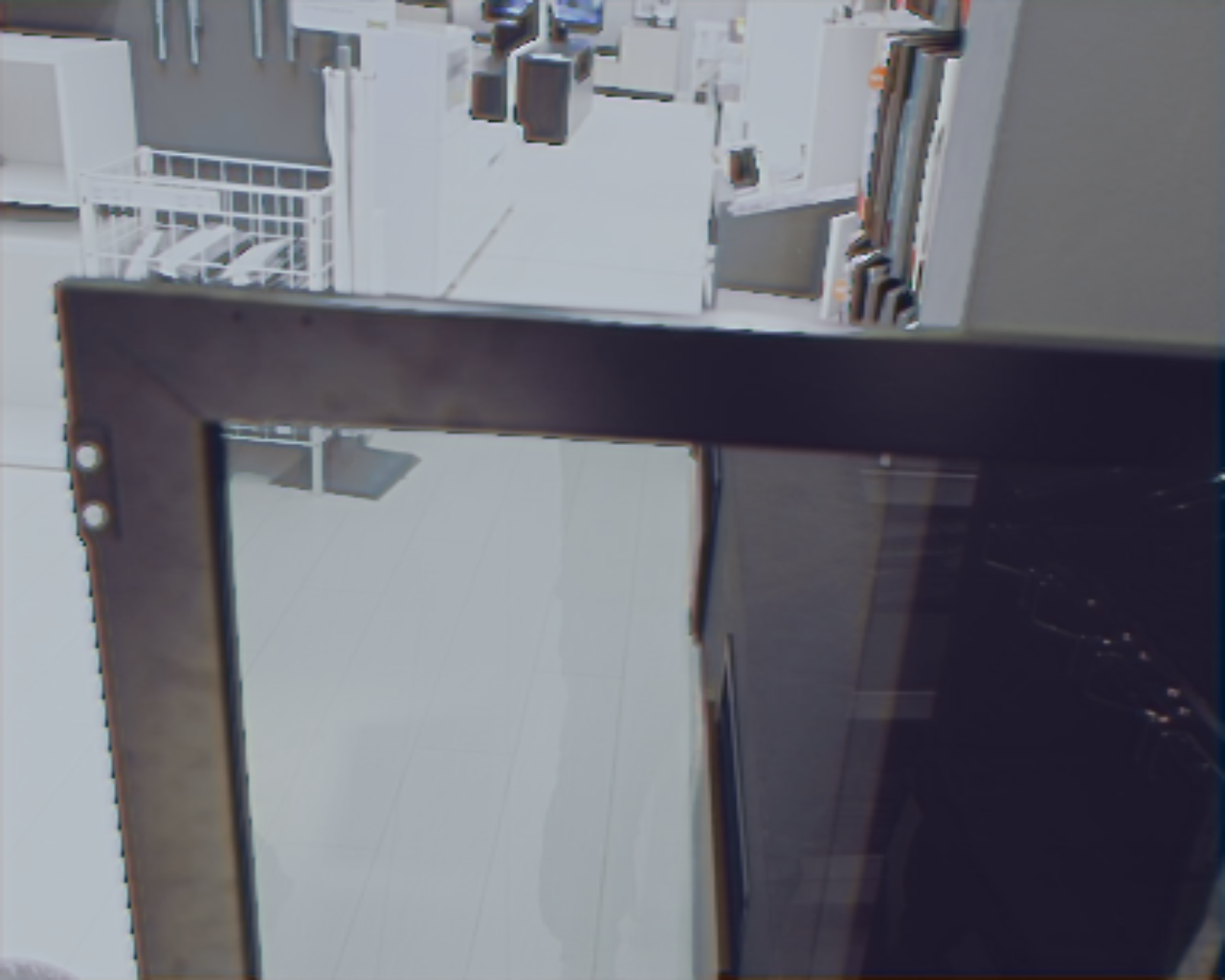}\vspace{-2pt}
\end{subfigure}
\begin{subfigure}{\subw}
\centering
\includegraphics[width=\textwidth]{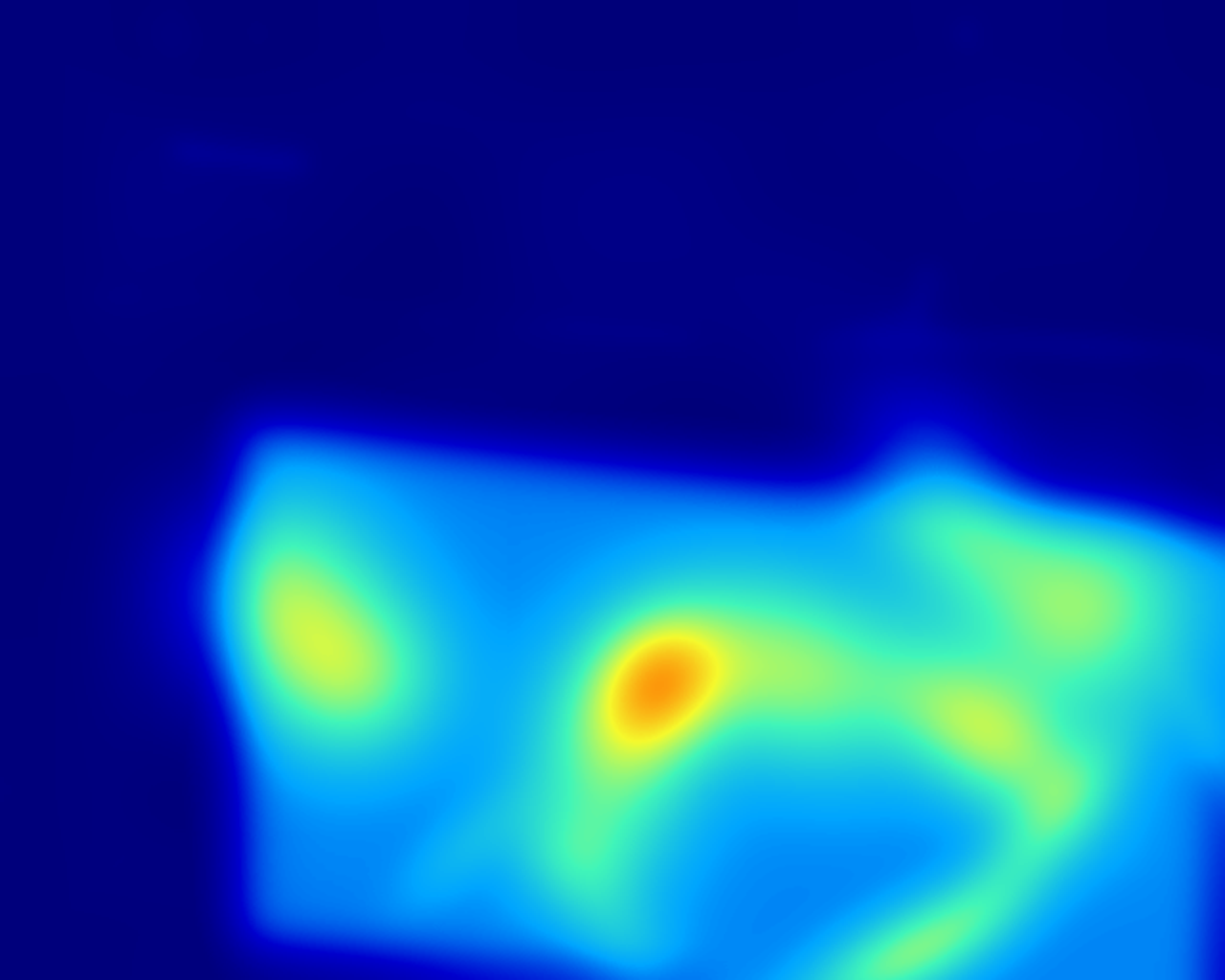}\vspace{-2pt}
\end{subfigure}\\ \vspace{1mm}
\begin{subfigure}{\subw}
\centering
\includegraphics[width=\textwidth]{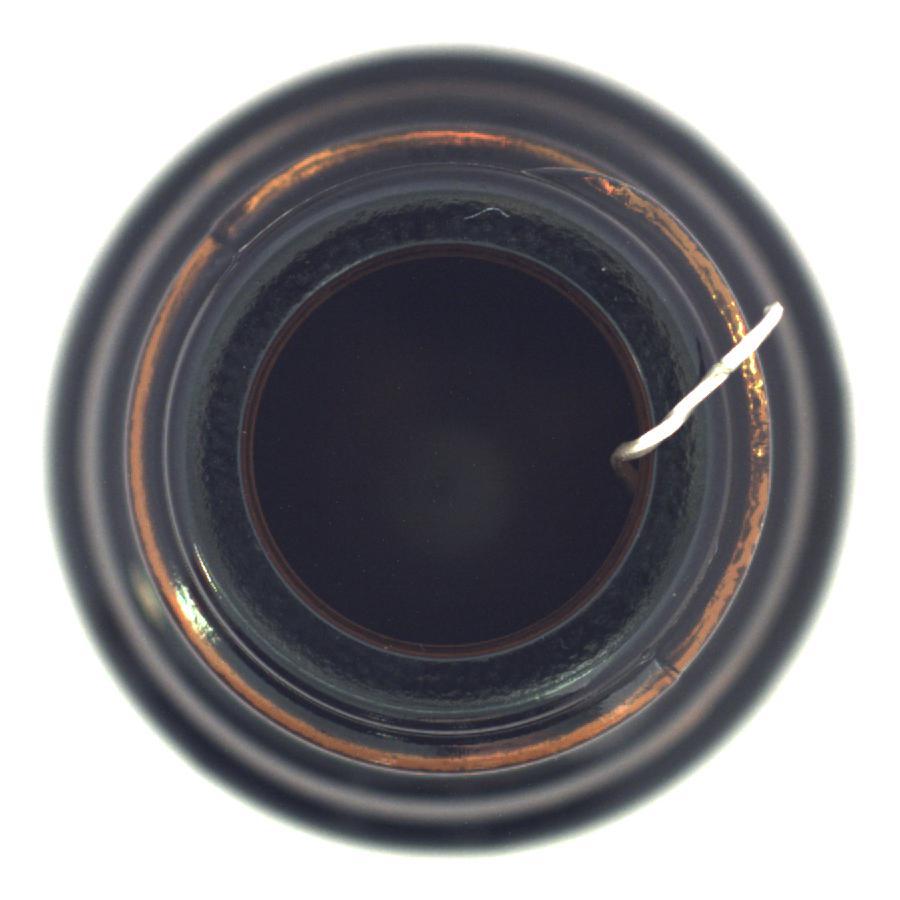}\vspace{-2pt}\caption*{Origin}
\end{subfigure}
\begin{subfigure}{\subw}
\centering
\includegraphics[width=\textwidth]{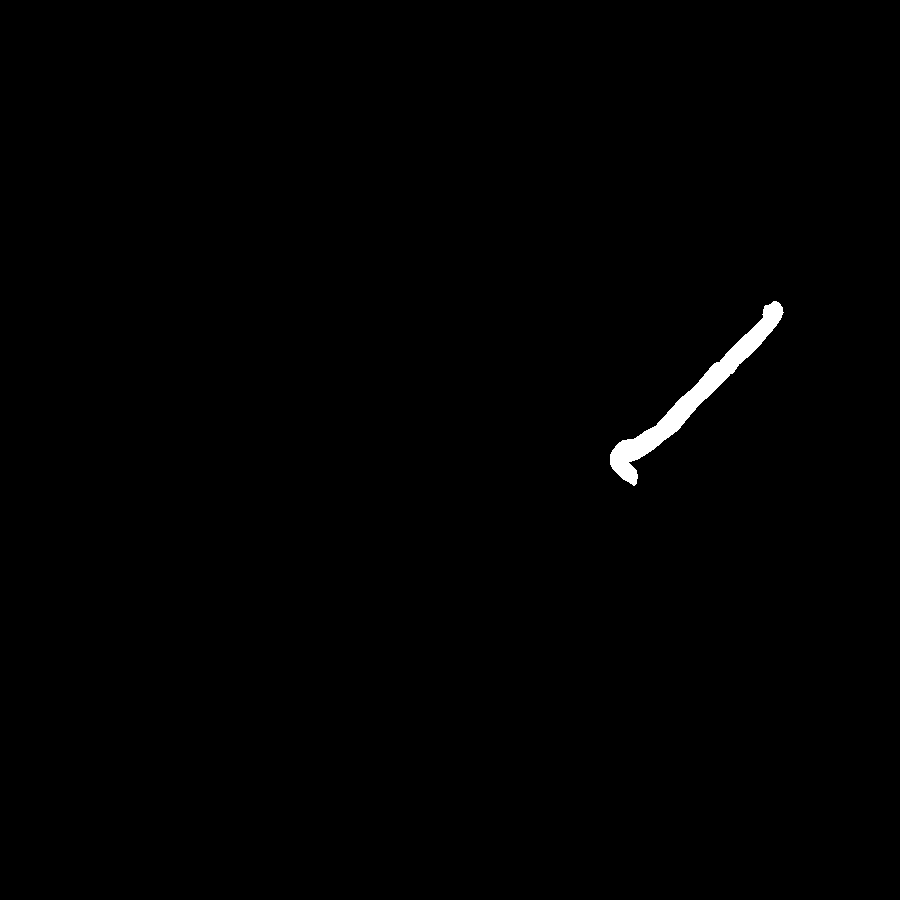}\vspace{-2pt}\caption*{GT}
\end{subfigure}
\begin{subfigure}{\subw}
\centering
\includegraphics[width=\textwidth]{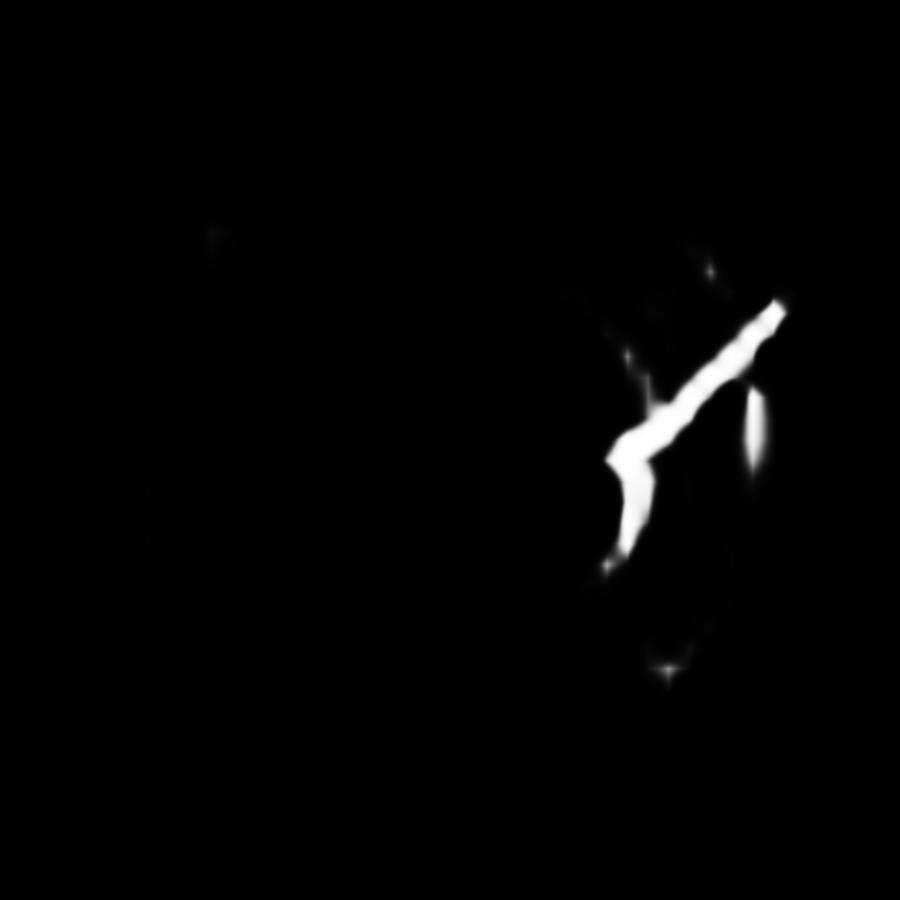}\vspace{-2pt}\caption*{TOP2}
\end{subfigure}
\begin{subfigure}{\subw}
\centering
\includegraphics[width=\textwidth]{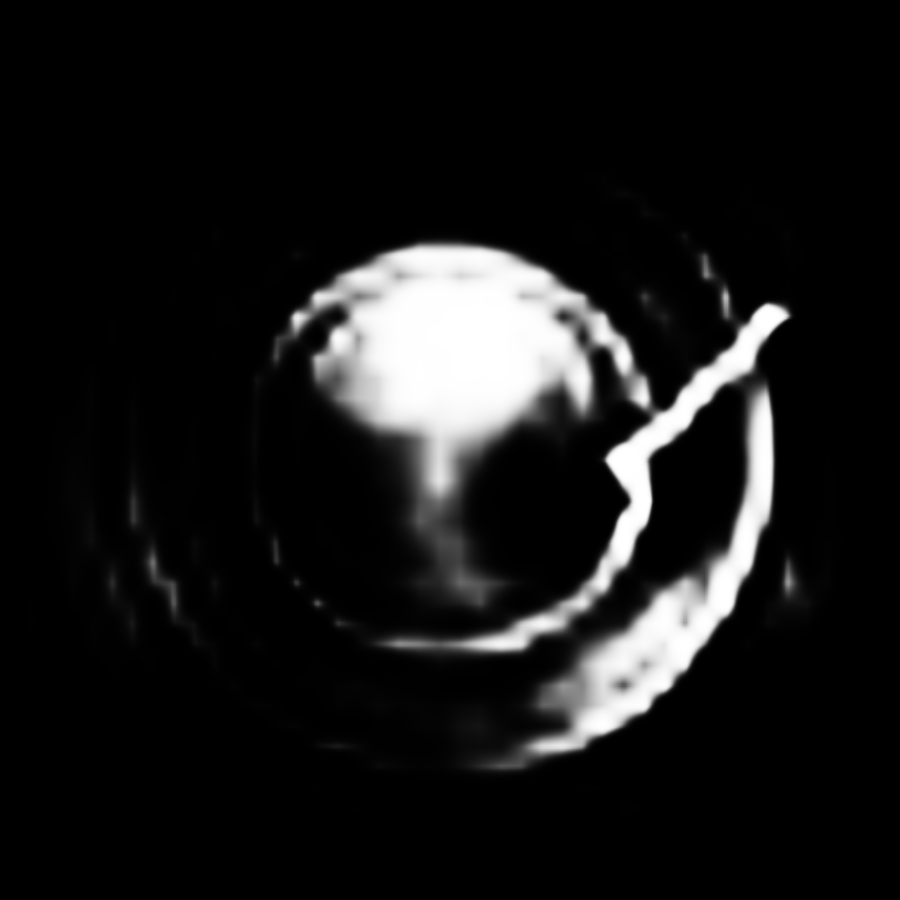}\vspace{-2pt}\caption*{TOP1}
\end{subfigure}
\begin{subfigure}{\subw}
\centering
\includegraphics[width=\textwidth]{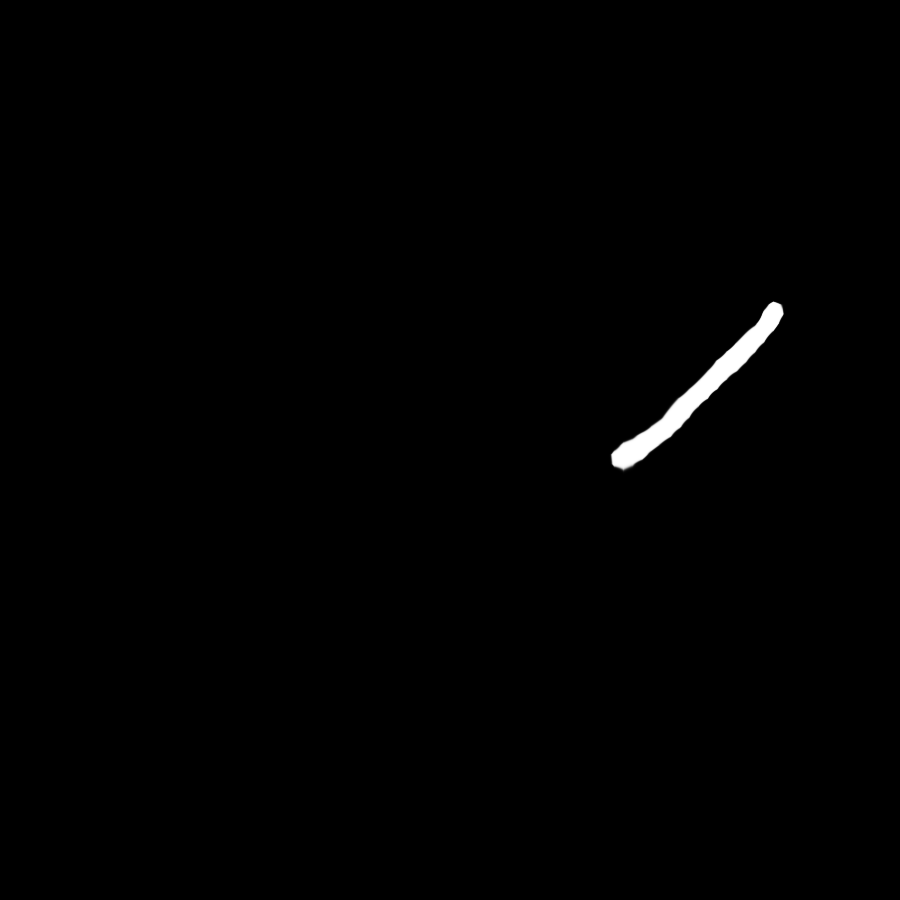}\vspace{-2pt}\caption*{RIDE}
\end{subfigure}
\begin{subfigure}{\subw}
\centering
\includegraphics[width=\textwidth]{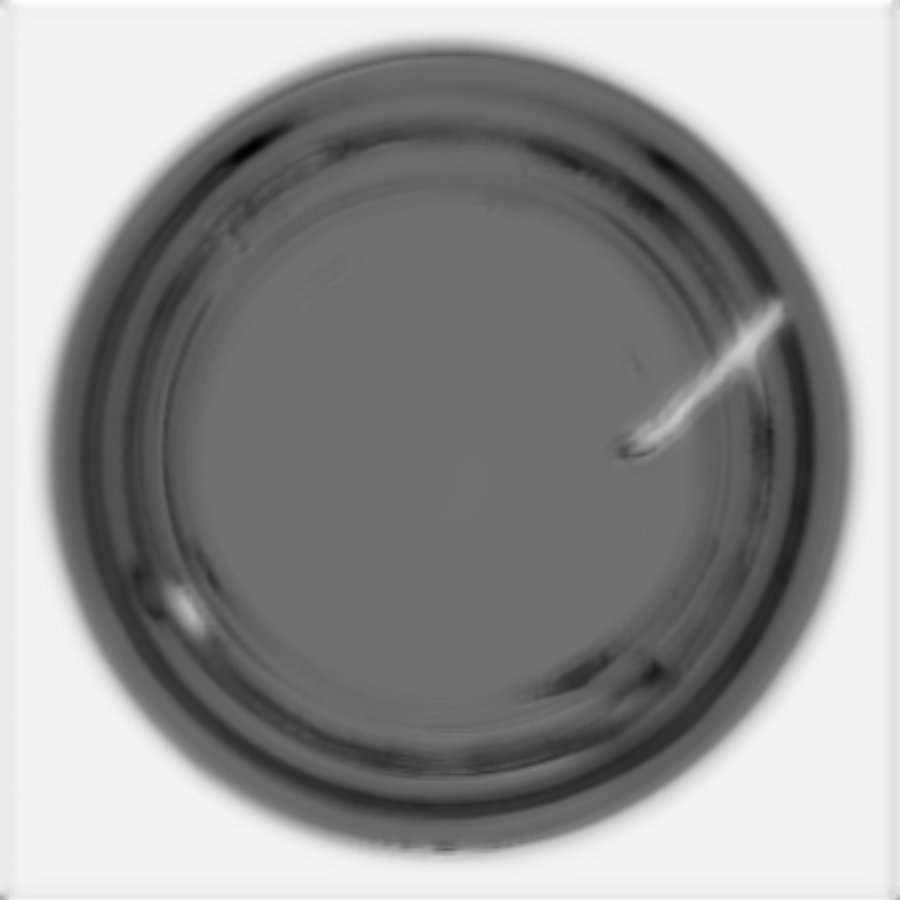}\vspace{-2pt}\caption*{$L$ map}
\end{subfigure}
\begin{subfigure}{\subw}
\centering
\includegraphics[width=\textwidth]{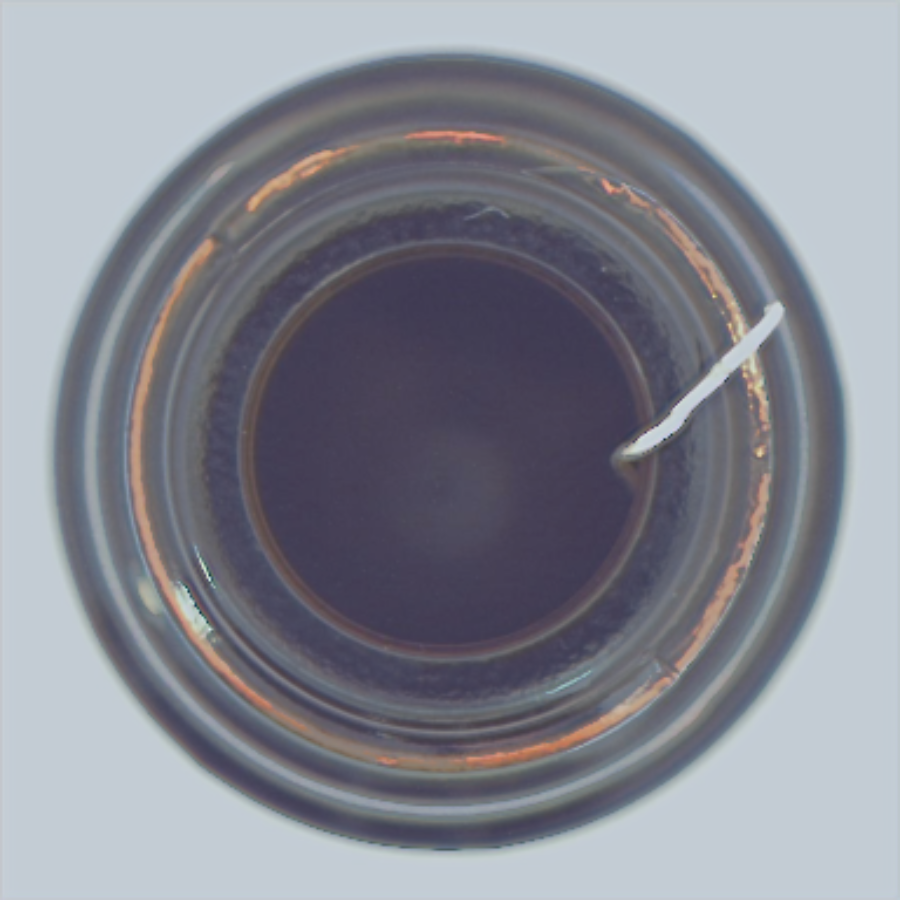}\vspace{-2pt}\caption*{$R$ map}
\end{subfigure}
\begin{subfigure}{\subw}
\centering
\includegraphics[width=\textwidth]{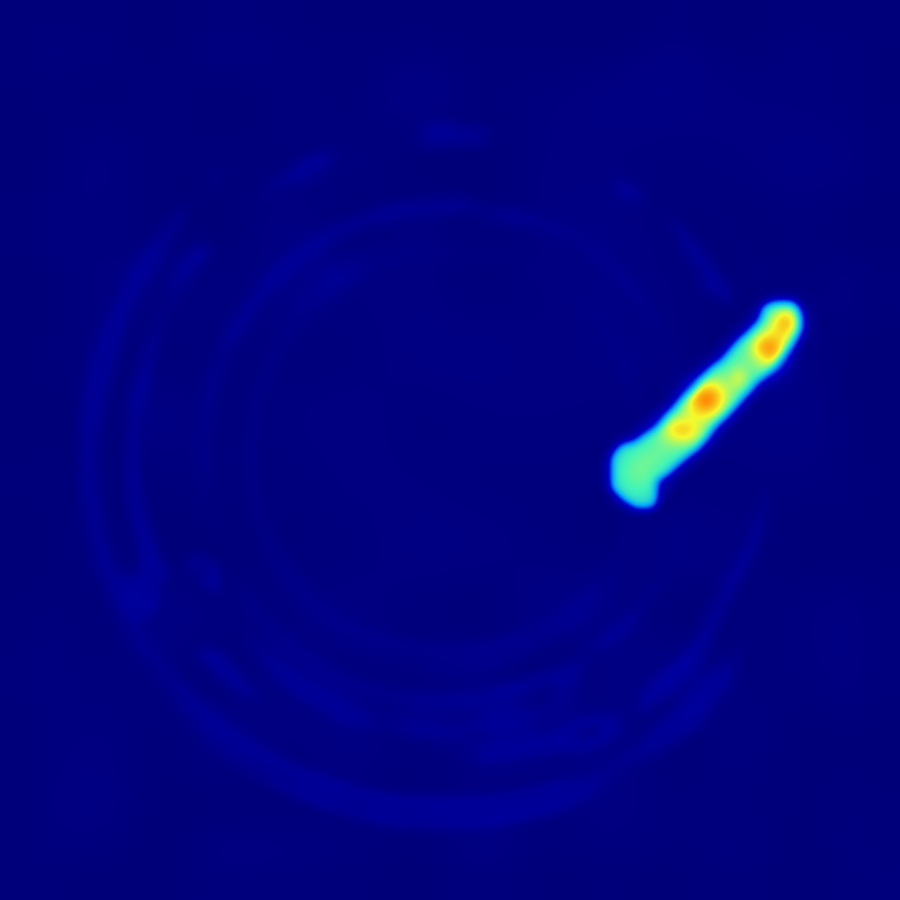}\vspace{-2pt}\caption*{$\Delta_R$}
\end{subfigure}
\caption{Qualitative comparison. We also present learned Retinex decomposition ($L$ and $R$ maps) and discriminability gap map $\Delta_R$. The TOP1/TOP2 baselines are same with those in \cref{table:CODQuanti,table:efficiency,table:PISQuanti,table:TODQuanti,table:CDDQuanti}.}
\label{fig:qualitative}
\vspace{-2mm}
\end{minipage}
\end{figure*}
    
\begin{table*}[tbp!]
\begin{minipage}{1\textwidth}
\setlength{\abovecaptionskip}{0cm} 
\centering
\caption{Efficiency comparison with cutting-edge COD methods. All numbers are obtained under a matched evaluation protocol (input size $352 \times 352$, FP32, batch size 1, single RTX 4090, synchronized CUDA timing) using each method's official released code.} \label{table:efficiency}
\resizebox{\columnwidth}{!}{
\setlength{\tabcolsep}{1.6mm}
\begin{tabular}{l|ccc|ccc|ccc}
\toprule
\multirow{2}{*}{Efficiency} & \multicolumn{3}{c|}{ResNet50} & \multicolumn{3}{c|}{Res2Net50} & \multicolumn{3}{c}{PVT V2} \\ \cline{2-10}
& FocusDiff~\cite{zhao2025focusdiffuser} & RUN~\cite{he2025run}    & \cellcolor{c2!20}  RIDE (Ours)  & FEDER~\cite{He2023Camouflaged}    & RUN~\cite{he2025run}      &\cellcolor{c2!20}  RIDE (Ours)    & CamoFocus~\cite{khan2024camofocus}  & RUN~\cite{he2025run}   &\cellcolor{c2!20}  RIDE (Ours)  \\ \midrule
Parameters (M) $\downarrow$             & 166.17   & 30.41  & \cellcolor{c2!20} \textbf{28.50} & 45.92    & 30.57     & \cellcolor{c2!20}  \textbf{28.66} & 68.85     & 65.17 &   \cellcolor{c2!20} \textbf{63.25}   \\
FLOPs (G) $\downarrow$                  & 7618.49  &    43.36  & \cellcolor{c2!20} \textbf{35.18}   & 50.03    & 45.73    & \cellcolor{c2!20} \textbf{35.38}    & 91.35       & 58.53 & \cellcolor{c2!20}  \textbf{49.18}  \\
FPS $\uparrow$                        & 0.23     & 22.75  & \cellcolor{c2!20}  \textbf{27.20}   & 14.02    & 20.26    & \cellcolor{c2!20} \textbf{25.40}    & 9.63       & 15.82 & \cellcolor{c2!20} \textbf{18.26}  \\ \bottomrule
\end{tabular}}
\end{minipage} 
\vspace{-3mm}
\end{table*}

\noindent\textbf{Camouflaged object detection.} \cref{table:CODQuanti,fig:qualitative} present the comprehensive comparison. \methodname{} achieves the best performance across all metrics. Notably, \methodname{} demonstrates large improvements in $F_\beta$, which measures the balance between precision and recall. This suggests that Retinex decomposition provides both better object localization (recall) and cleaner boundary delineation (precision). As shown in Table~\ref{table:efficiency}, RIDE also achieves the best efficiency among recent methods. 
To address potential single-run variance concerns, we additionally report a 5-seed analysis on COD10K and NC4K (ResNet-50) for \methodname{} and the strongest competing baselines (RUN, FSEL) in Appendix~\ref{app:multiseed}.

\noindent\textbf{Polyp image segmentation.} Polyp endoscopy is a strong test of our theory: point-source illumination on curved mucosal surfaces produces precisely the $\rho < 0$ regime predicted to maximize Retinex gains. As shown in \cref{table:PISQuanti,fig:qualitative}, \methodname{} achieves the best results, verifying our superiority.

\begin{table*}[tbp!]
\begin{minipage}{0.317\textwidth}
\setlength{\abovecaptionskip}{0cm} 
		\setlength{\belowcaptionskip}{0cm}
\centering
\caption{Results on the PIS task (\textit{ETIS}~\cite{silva2014toward}).
        } \label{table:PISQuanti}
	\resizebox{1\columnwidth}{!}{
		\setlength{\tabcolsep}{1.67mm}
	\begin{tabular}{l|ccc}
		\toprule 
		{Methods} 
		& \multicolumn{1}{c}{\cellcolor{gray!40}mDice~$\uparrow$} & \multicolumn{1}{c}{\cellcolor{gray!40}mIoU~$\uparrow$} & \multicolumn{1}{c}{\cellcolor{gray!40}$S_\alpha$~$\uparrow$} \\ \midrule
        PolypPVT~\cite{dong2023polyp}  & {{0.787}} & {{0.706}} & {{0.871}} \\
        CoInNet~\cite{jain2023coinnet} & 0.759 & 0.690 & 0.859 \\
        LSSNet~\cite{wang2024lssnet} & 0.779 & 0.701 & 0.867 \\
        MEGANet~\cite{bui2024meganet} & 0.739 & 0.665 & 0.836 \\
        RUN~\cite{he2025run}  & {{0.788}} &{{0.709}} & {{0.878}}   \\
        \rowcolor{c2!20} RIDE (Ours) & \textbf{0.793} & \textbf{0.718} & \textbf{0.884} \\
	 \bottomrule                      
	\end{tabular}}    
\end{minipage}
\begin{minipage}{0.3\textwidth}
\setlength{\abovecaptionskip}{0cm} 
		\setlength{\belowcaptionskip}{0cm}
	\centering
        \caption{ Results on the TOD task (\textit{GDD}~\cite{mei2020don}). 
        } \label{table:TODQuanti}
	\resizebox{\columnwidth}{!}{
		\setlength{\tabcolsep}{1.7mm}
	\begin{tabular}{l|ccc}\toprule 

		 {Methods} & \cellcolor{gray!40}mIoU~$\uparrow$&\cellcolor{gray!40}$F_\beta^{max}$~$\uparrow$&\cellcolor{gray!40} $M$~$\downarrow$ \\ \midrule
        EBLNet~\cite{he2021enhanced} & 0.870                                 & 0.922                                 & 0.064   \\
        RFENet~\cite{fan2023rfenet} & 0.886 & 0.938 & 0.057  \\
        IEBAF~\cite{han2023internal} & 0.887 & {{0.944}} & 0.056  \\
        GhostNet~\cite{yan2024ghostingnet} & {{0.893}}  & 0.943 & {{0.054}} \\
	RUN~\cite{he2025run}  & {{0.895}} & {{0.952}} & {{0.051}}  \\ 
    \rowcolor{c2!20} RIDE (Ours) & \textbf{0.907} & \textbf{0.955} & \textbf{0.047}\\
    \bottomrule  \end{tabular}}
\end{minipage}
\begin{minipage}{0.363\textwidth}
\centering
	\setlength{\abovecaptionskip}{0cm} 
		\setlength{\belowcaptionskip}{0cm}
	\caption{Results on the CDD task (\textit{CDS2K}~\cite{fan2023advances}).  
    } \label{table:CDDQuanti}
	\resizebox{1\columnwidth}{!}{
		\setlength{\tabcolsep}{2mm}
		\begin{tabular}{l|cccccccc}
        \toprule 
Methods    & {\cellcolor{gray!40}$S_\alpha$~$\uparrow$} & {\cellcolor{gray!40}$M$~$\downarrow$} & {\cellcolor{gray!40}$E_\phi$~$\uparrow$} & {\cellcolor{gray!40}$F_\beta$~$\uparrow$} \\ \midrule
HitNet~\cite{hu2022high}     & 0.563 & 0.118 & 0.564  & {{0.298}}   \\
CamoFormer~\cite{yin2024camoformer} & {{0.589}} & {{0.100}} & {{0.588}}  & {{0.330}} \\
OAFormer~\cite{yang2023oaformer}   & 0.541 & 0.121 & 0.535  & 0.216  \\
FEDER~\cite{He2023Camouflaged} & 0.538 & 0.070 & 0.586 &  0.288  \\
RUN~\cite{he2025run} & {{0.590}} &  {{0.068}} & {{0.595}}  & {{0.298}}  \\
\rowcolor{c2!20} RIDE (Ours) & \textbf{0.609} & \textbf{0.061} & \textbf{0.612} & \textbf{0.334} \\
\bottomrule
\end{tabular}}
\end{minipage}\\
\begin{minipage}{\textwidth}
\setlength{\abovecaptionskip}{0cm} 
\setlength{\belowcaptionskip}{0cm}
\centering
\caption{{Generalization beyond COS.} ``+R'' denotes augmentation with our Retinex strategy.} \label{tab:generalization}
\resizebox{\columnwidth}{!}{
\setlength{\tabcolsep}{2.5mm}
\begin{tabular}{l|l|c|cc|c}
\toprule
Tasks / Datasets & Base methods & Metrics & Baseline & +R (Ours) & $\Delta (\%)$ \\
\midrule
Semantic Segmentation (ADE20K~\cite{zhou2017scene}) & PEM~\cite{cavagnero2024pem} & mIoU $\uparrow$ & 45.0 & 46.0 & 2.22 \\
Instance Segmentation (COCO~\cite{lin2014microsoft}) & Mask2Former~\cite{cheng2022masked} & AP $\uparrow$ & 38.0 & 38.6 & 1.58 \\
Shadow Detection (SBU~\cite{vicente2016large} ) & Spider~\cite{zhao2024spider} & BER $\downarrow$ & 0.040 & 0.038 & 5.00 \\
Infrared Small Target Detection (IRSTD-1k~\cite{zhang2022isnet}) & IRSAM~\cite{zhang2024irsam} & IoU $\uparrow$ & 73.69 & 75.53 & 2.50 \\
Salient Object Detection (DUTS-test~\cite{wang2017learning}) & RUN~\cite{he2025run} & $F_\beta$ $\uparrow$ & 0.886 & 0.901 & 1.69 \\
Video COS (CAD~\cite{cheng2022implicit}) & ZoomNeXt~\cite{pang2024zoomnext} & $S_\alpha$ $\uparrow$ & 0.757 & 0.769 & 1.59 \\
\bottomrule
\end{tabular}
} 
\end{minipage}
\vspace{-4mm}
\end{table*} 
\begin{table}[t]
\begin{minipage}{0.515\textwidth}
\setlength{\abovecaptionskip}{0cm} 
\setlength{\belowcaptionskip}{0cm}
\centering
\caption{Ablation study on \textit{COD10K}. For 6-9, we replace TRD with other decomposition methods.
} \label{tab:ablation}
\resizebox{\columnwidth}{!}{
\setlength{\tabcolsep}{1.2mm}
\begin{tabular}{c|ccccc|cccc}
\toprule
\# & TRD & DGA & CBC & ME & RB & $M \downarrow$ & $F_\beta \uparrow$ & $E_\phi \uparrow$ & $S_\alpha \uparrow$ \\
\midrule
1 & \cmark & & & & & 0.032 & 0.740 & 0.897 & 0.819 \\
2 & \cmark & \cmark & & & & 0.030 & 0.749 & 0.902 & 0.825 \\
3 & \cmark & \cmark & \cmark & & & 0.029 & 0.755 & 0.906 & 0.829 \\
4 & \cmark & \cmark & \cmark & \cmark & & 0.028 & 0.760 & 0.910 & 0.832 \\
5 & \cmark & \cmark & \cmark & \cmark & \cmark & \textbf{0.028} & \textbf{0.763} & \textbf{0.912} & \textbf{0.834} \\
\midrule
6 & FFT & \cmark & \cmark & -- & -- & 0.032 & 0.738 & 0.895 & 0.818 \\
7 & Wavelet & \cmark & \cmark & -- & -- & 0.031 & 0.742 & 0.899 & 0.821 \\
8 & DCT & \cmark & \cmark & -- & -- & 0.032 & 0.736 & 0.894 & 0.817 \\
9 & Fixed Reti. & \cmark & \cmark & -- & -- & 0.031 & 0.745 & 0.901 & 0.823 \\
\bottomrule
\end{tabular}}
\end{minipage}\hfill
\begin{minipage}{0.473\textwidth}
\setlength{\abovecaptionskip}{0cm} 
\setlength{\belowcaptionskip}{0cm}
\centering
\caption{DGA design ablation, including standard fusion rules, gap design, and window size.} \label{tab:ablationDGA}
\resizebox{\columnwidth}{!}{
\setlength{\tabcolsep}{0.95mm}
\begin{tabular}{l|cccc}
\toprule
Fusion strategy & $M\!\downarrow$ & $F_\beta\!\uparrow$ & $E_\phi\!\uparrow$ & $S_\alpha\!\uparrow$ \\
\midrule
Concatenation (along channel) & 0.031 & 0.745 & 0.900 & 0.823 \\
Channel attention (SE) & 0.030 & 0.750 & 0.903 & 0.826 \\
Cross attention (Transformer) & 0.029 & 0.753 & 0.905 & 0.828 \\
\midrule
DGA (variance only, no gap) & 0.030 & 0.751 & 0.904 & 0.827 \\
\midrule
DGA (full, $k = 3$) & 0.029 & 0.757 & 0.908 & 0.831 \\
DGA (full, $k = 5$) & 0.029 & 0.760 & 0.910 & 0.832 \\
DGA (full, $k = 11$) & 0.029 & 0.755 & 0.907 & 0.830 \\
\rowcolor{c2!20} {DGA (full, $k = 7$, ours)} & \textbf{0.028} & \textbf{0.763} & \textbf{0.912} & \textbf{0.834} \\
\bottomrule
\end{tabular}
}
\end{minipage}
\vspace{-8mm}
\end{table}

\begin{table}[t]
\begin{minipage}{0.44\textwidth}
\setlength{\abovecaptionskip}{0cm}
\setlength{\belowcaptionskip}{0cm}
\centering
\caption{Contrastive loss space comparison.}
\label{tab:cl_space}
\resizebox{\columnwidth}{!}{
\setlength{\tabcolsep}{1.5mm}
\begin{tabular}{l|cccc}
\toprule
Contrastive Space & $M\!\downarrow$ & $F_\beta\!\uparrow$ & $E_\phi\!\uparrow$ & $S_\alpha\!\uparrow$ \\
\midrule
None (no $\Lcontrast$) & 0.030 & 0.749 & 0.902 & 0.825 \\
$I$-space & 0.031 & 0.745 & 0.900 & 0.823 \\
$L$-space & 0.029 & 0.752 & 0.904 & 0.827 \\
\rowcolor{c2!20} $R$-space (ours) & \textbf{0.028} & \textbf{0.763} & \textbf{0.912} & \textbf{0.834} \\
($I$ + $R$)-space & 0.029 & 0.757 & 0.908 & 0.830 \\
\bottomrule
\end{tabular}}
\end{minipage}
\hfill
\begin{minipage}{0.555\textwidth}
\setlength{\abovecaptionskip}{0cm}
\setlength{\belowcaptionskip}{0cm}
\centering
\caption{Alternative homogeneous decompositions. 
}
\label{tab:alt_decomp}
\resizebox{\columnwidth}{!}{
\setlength{\tabcolsep}{1.55mm}
\begin{tabular}{l|cccc}
\toprule
Decomposition & $M\!\downarrow$ & $F_\beta\!\uparrow$ & $E_\phi\!\uparrow$ & $S_\alpha\!\uparrow$ \\
\midrule
No decomposition (baseline) & 0.034 & 0.731 & 0.891 & 0.812 \\
Guided Filter (edge-preserving) & 0.033 & 0.735 & 0.894 & 0.815 \\
Color Constancy (Gray World) & 0.034 & 0.733 & 0.893 & 0.813 \\
Bilateral (texture/structure) & 0.033 & 0.737 & 0.895 & 0.816 \\
\rowcolor{c2!20} {Retinex (ours)} & \textbf{0.028} & \textbf{0.763} & \textbf{0.912} & \textbf{0.834} \\
\bottomrule
\end{tabular}}
\end{minipage}
\vspace{-2mm}
\end{table}

\noindent\textbf{Transparent object detection.} Glass surfaces produce specular reflections at boundaries, bringing segmentation difficulties. As presented in \cref{table:TODQuanti,fig:qualitative}, \methodname{} outperforms all competing methods, confirming Retinex effectiveness for transparent objects.

\noindent\textbf{Concealed defect detection.} 
Concealed defects on textured industrial surfaces (CDS2K~\cite{fan2023advances}) involve directional lighting interacting with material variations. As shown in \cref{table:CDDQuanti,fig:qualitative}, \methodname{} outperforms existing methods, indicating that the Retinex principle transfers to industrial settings.

\textbf{Qualitative analysis.} Fig.~\ref{fig:qualitative} shows comparisons across the four COS sub-tasks. The decompositions make per-task physical mechanisms visible: \eg, for polyps, $L$ captures the endoscope's point-source falloff while $R$ cleanly exposes vascularization that separates lesions from mucosa.

\textbf{Generalization Beyond COS.} We apply the Retinex strategy as a plug-in (``+R'') to existing methods on six segmentation tasks (Table~\ref{tab:generalization}). Strong gains on shadow detection (+5.00\% BER reduction) and infrared target detection (+2.50\% IoU) confirm that illumination-entangled tasks benefit most from Retinex augmentation. Modest but consistent gains on other tasks indicate broader generalization.

\vspace{-2mm}
\subsection{Ablation Studies}\vspace{-1mm}
\label{sec:ablation}

All ablations and in-depth analysis use the {ResNet50 backbone on COD10K} unless stated.

\textbf{Component contributions.} As shown in rows 1-5 of Table~\ref{tab:ablation}, each component yields a consistent gain over the TRD-only configuration, including discriminability-guided fusion, reflectance-space contrast, clean edge attribution, and reflectance-boundary alignment. 
\begin{figure}[t]
\setlength{\abovecaptionskip}{0cm}
\centering
\includegraphics[width=\linewidth]{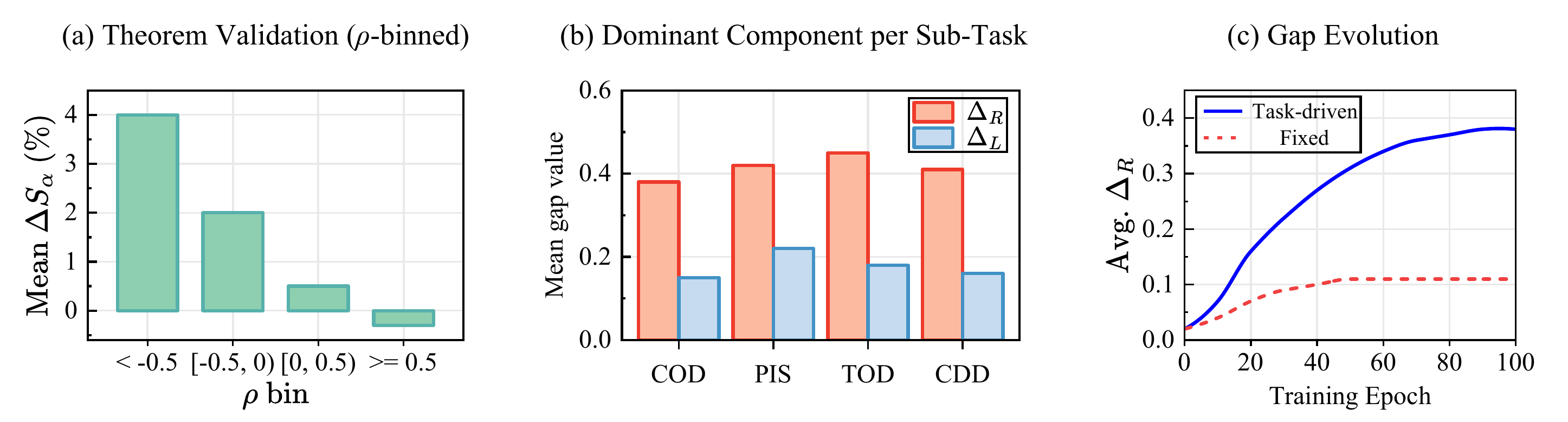}
\caption{{In-depth analysis.} (a) $\Delta S_\alpha$ vs.\ $\rho$ bins, validating Theorem~\ref{thm:main}. (b) Dominant component across COS sub-tasks. (c) Task-driven decomposition yields larger gaps than fixed Retinex.}
\label{fig:analysis}
\vspace{-5mm}
\end{figure}
 
\textbf{Decomposition family.} As presented in rows 6-9 of Table~\ref{tab:ablation}, holding the rest of the architecture fixed, swapping TRD for FFT, Wavelet, DCT, or a fixed RetinexNet (ME and RB disabled) loses $0.011$--$0.017$ $S_\alpha$. Even the fixed Retinex variant trails task-driven TRD by $0.011$, indicating that both the homogeneous Retinex factorization and end-to-end task-driven optimization are jointly essential.
 
\textbf{DGA design choices.} In Table~\ref{tab:ablationDGA}, DGA ($k=7$ with explicit gap) outperforms standard fusion baselines (concatenation, SE, cross attention) by $0.6$--$1.1\%$ $S_\alpha$. Removing the gap term and using raw variance alone loses $0.7\%$, confirming that the relative comparison against the composite carries the discriminative signal. DGA is robust to $k \in \{3,5,7,11\}$ within $0.4\%$ $S_\alpha$.
 
\textbf{Contrastive loss space.} $I$-space learning \emph{degrades} performance (Table~\ref{tab:cl_space}) because visual entanglement enforces $f_I^+ \approx f_I^-$ and drives feature collapse. $L$-space helps mildly, while $R$-space yields the largest gain. $I + R$ space underperforms $R$ alone, as the $I$-space reintroduces the collapse signal.
 
\textbf{Alternative homogeneous decompositions.} As indicated in Table~\ref{tab:alt_decomp}, edge-preserving filters (Guided Filter, Bilateral) and Gray-World color constancy yield only marginal gains over the baseline. Retinex outperforms all alternatives by $1.8$--$2.2\%$, as none of these alternatives admit a principled discriminability mechanism nor end-to-end joint optimization with the segmentation objective.

\vspace{-2mm}
\subsection{In-Depth Analysis}
\label{sec:analysis}\vspace{-1mm}

\textbf{Validating the Discriminability Gap Theorem.} As shown in Fig.~\ref{fig:analysis}(a), for each of the 2026 \textit{COD10K} test images we estimate $\rho = \cos\theta(\delta_L, \delta_R)$ from the learned $L, R$ maps and bin images into four $\rho$ ranges. Mean $\Delta S_\alpha$ over the no-decomposition baseline ($S_\alpha = 0.812$) decreases monotonically with $\rho$: $+4.0\%$ for $\rho < -0.5$ (20\% of images), $+2.0\%$ for $-0.5 \leq \rho < 0$ (38\%), $+0.5\%$ for $0 \leq \rho < 0.5$ (30\%), and $-0.3\%$ for $\rho \geq 0.5$ (12\%). The continuous Pearson correlation across all 2026 images is $r = -0.67$ (95\% CI $[-0.70, -0.64]$, $p < 10^{-3}$), confirming that decomposition gain scales with anti-correlation as Theorem~\ref{thm:main} predicts. The mild negative mean in the $\rho \geq 0.5$ bin lies within per-image variance and reflects boundary regions where (A2) is weakly violated; the bound itself saturates to unity in this aligned regime.
 
\textbf{Per-sub-task dominant component.} As presented in Fig.~\ref{fig:analysis} (b), for each COS sub-task we sample 200 test images and report the mean $\Delta_R$ and $\Delta_L$ at the second-deepest decoder level. The dominant component varies systematically with the underlying physical mechanism in \S\ref{sec:theory}: COD ($\Delta_R = 0.38$ vs.\ $\Delta_L = 0.15$) and PIS ($0.42$ vs.\ $0.22$) are reflectance-dominated, since material/tissue differences carry the signal under equalized composite appearance; 
TOD and CDD show the similar trends.

\textbf{Task-driven decomposition amplifies the gap (Fig.~\ref{fig:analysis}(c)).} We track the mean $\Delta_R$ on a fixed validation subset every 10 epochs. Task-driven TRD grows the gap from $0.02$ to $\sim$0.38 (saturating around epoch 60--70), whereas a frozen RetinexNet~\cite{wei2018deep} plateaus at $\sim$0.11 within 30 epochs. The $3.5\times$ ratio at convergence is direct evidence that segmentation gradients actively reshape the decomposition toward discriminative factorizations, which neither fixed Retinex nor pure reconstruction can deliver.

\vspace{-2mm}
\section{Discussion}\label{sec:discussion} \vspace{-1mm}
% See limitations and future directions in Appendix \ref{sec:limitation}.
 
\textbf{Retinex as an inductive bias in the foundation-model era.} Even on top of strong foundation backbones (DINOv2-L), Retinex decomposition delivers additional gains over composite-space representations alone. Foundation models scale representations within the composite space, whereas Retinex restructures the observation space before feature extraction. The two are complementary rather than competing, and the auxiliary modules of \methodname{} maintain a constant absolute cost regardless of backbone scale, making the framework increasingly attractive as backbones grow.

\vspace{-2mm}
\section{Conclusion}\vspace{-1mm}
\label{sec:conclusion} 
We present a homogeneous--heterogeneous decomposition framework for COS and show that Retinex-based decomposition helps break camouflage by exposing discriminative differences. Based on this insight, \methodname{} achieves SOTA performance on four benchmarks. The Discriminability Gap Theorem provides insight into when the method works, offering a principled perspective on decomposition-based dense prediction. Extensive experiments validate the effectiveness and potential of our RIDE.

\newpage
\bibliographystyle{unsrt}
\bibliography{egbib}

\end{document}